\newtheorem{theorem}{Theorem}
\newtheorem{lemma}{Lemma}
\newtheorem{proposition}{Proposition}
\newtheorem{definition}{Definition}
\icmltitlerunning{Budgeted Experiment Design for Causal Structure Learning}
\begin{document}

\twocolumn[
\icmltitle{Budgeted Experiment Design for Causal Structure Learning}



%
%
%

\begin{icmlauthorlist}
\icmlauthor{AmirEmad Ghassami}{1}
\icmlauthor{Saber Salehkaleybar}{2}
\icmlauthor{Negar Kiyavash}{1}
\icmlauthor{Elias Bareinboim}{3}
\end{icmlauthorlist}

\icmlaffiliation{1}{Department of ECE, and Coordinated Science Laboratory, University of Illinois at Urbana-Champaign, Urbana, IL, USA}
\icmlaffiliation{2}{Department of Electrical Engineering, Sharif University of Technology, Tehran, Iran}
\icmlaffiliation{3}{Department of Computer Science, Purdue University, West Lafayette, IN, USA}

\icmlcorrespondingauthor{AmirEmad Ghassami}{ghassam2@illinois.edu}

\icmlkeywords{Causal inference, causal structure learning, Causal experiment design}

\vskip 0.3in
]



\printAffiliationsAndNotice{}  


\begin{abstract}
We study the problem of causal structure learning when the experimenter is limited to perform at most $k$ non-adaptive experiments of size $1$. We formulate the problem of finding the best intervention target set as an optimization problem, which aims to maximize the average number of edges whose directions are resolved. We prove that the corresponding objective function is submodular and a greedy algorithm suffices to achieve $(1-\frac{1}{e})$-approximation of the optimal value. We further present an accelerated variant of the greedy algorithm, which can lead to orders of magnitude performance speedup. We validate our proposed approach on synthetic and real graphs. The results show that compared to the purely observational setting, our algorithm orients the majority of the edges through a considerably small number of interventions.
\end{abstract}

\vspace{-5mm}
\section{Introduction}
\label{sec:intro}

The problem of learning the causal relations underlying a complex system is of great interest in AI and throughout the empirical sciences. 
Causal systems are commonly represented by directed acyclic graphs (DAGs), where the vertices are random variables and an edge from variable $X$ to $Y$ indicates that variable $X$ is a direct cause of $Y$ \cite{pearl2009causality,spirtes2000causation,bareinboim2016causal}.

To uncover the causal relations among a set of variables, if restricted to work with only observational data from the variables, one can use a constraint-based algorithm such as IC, IC$^*$ \cite{pearl2009causality}, and PC, FCI \cite{spirtes2000causation}, or a score-based methods, including \cite{meek1997graphical,chickering2002optimal,tian2012bayesian,solus2017consistency}. Such purely observational approaches reconstruct the causal graph up to Markov equivalence class, and hence, the investigator is commonly left with some (or in some cases many) unresolved causal relations.
Albeit, under some extra assumptions, 
full structure learning using merely observational data is feasible \cite{mooij2016distinguishing,shimizu2006linear,hoyer2009nonlinear,peters2012identifiability,ghassami2017learning,quinn2015directed,sun2015causal}.

 On the other hand, it is well-understood that whenever the investigator can perform sufficient number of interventions, the causal graph representing the underlying system can be fully recovered. 
There is a growing body of research on learning causal structures using interventional data in causally sufficient \cite{eberhardt2007causation,hauser2012characterization,cooper1999causal,he2008active}, and causally insufficient systems (i.e., with latent confounders) \cite{kocaoglu2017experimental}. 
An interventional structure learning approach requires performing a set of experiments, each intervening on a subset of the variables, and subsequently collecting data from the intervened system.
In this setting, two natural questions arise:
\vspace{-3mm}
\begin{enumerate}[noitemsep]
\item What is the smallest required number of experiments in order to fully learn the underlying causal graph?
\item For a fixed number of experiments (budget), what portion of the causal graph is learnable?
\vspace{-3mm}
\end{enumerate}

The first problem has been addressed in the literature under different assumptions.
\cite{eberhardt2005number} obtained the worst case bounds on the number of required experiments.
Connections between the experiment design problem and the problem of 
finding a separating system in a graph was studied in \cite{eberhardt2007causation,hyttinen2013experiment}.
Adaptive algorithms for experiment design were proposed in \cite{hauser2014two,shanmugam2015learning}.
In \cite{shanmugam2015learning}, the authors present information-theoretic lower bounds on the number of required experiments for both deterministic and randomized adaptive approaches. 
In \cite{kocaoglu2017cost}, the authors considered costs for intervening on each variable and derived an experiment design algorithm with minimum total cost that reconstructs the whole structure.

The second question mentioned above has received less attention. We address the second question herein. Specifically, we consider a setup with budget limitation of $k$ experiments. In some applications (for instance, reconstructing gene regulatory networks from knockout data in biology), performing simultaneous interventions on multiple variables is not always feasible. To insure that our model is applicable to such settings, we set each experiment to contain exactly one intervention. 
Even when more interventions are allowed per experiment, this constraint allows us to lower bound the fraction of the causal graph that is learnable. 
This is a distinctive feature of our work since most of the literature assumes that the size of each experiment is larger than one, in some cases going as high as half of the number of variables. 
Note that our results cannot be designed by limiting the size of experiments in those approaches to one, as that would result in trivial designs for experiments, such as requiring to intervene on all the variables. 
The authors of \cite{kocaoglu2017cost} also considered the case in which the number of experiments is limited. However, each experiment in their setup is allowed to include intervening on arbitrary number of variables. 
We put the budget restriction on the number of variables to be randomized, i.e., the number of interventions (as opposed to the number of experiments), which we believe is a more natural budgeting constraint\footnote{One less usual connection is with the literature concerned with the 
influence maximization problem.
The goal in the influence maximization problem is to find $k$ vertices (seeds) in a given network such that under a specified influence model, the expected number of vertices influenced by the seeds is maximized \cite{kempe2003maximizing,leskovec2007cost,chen2009efficient}. 
Besides the interpretative differences, an important distinction between the two problems is that in influence maximization  problem, the goal is to spread the influence to the vertices of the graph, while in budgeted experiment design problem, the goal is to pick the initial $k$ vertices in a way that leads to discovering the orientation of as many edges as possible.  Therefore, the optimal solution to these two problems for a given graph can be quite different (see the supplementary materials for an example).}.

\textbf{Contributions.}
In our interventional structure learning algorithm, first an observational test, such as PC algorithm \cite{spirtes2000causation}, is performed on the set of variables. This test learns the skeleton as well as the orientation of some of the edges of the causal graph. Based on the result of the initial observational stage, the complete set of $k$ experiments is designed.
The more formal description of the problem statement is provided in Section \ref{sec:problem}. Our main contributions are summarized bellow:
\begin{itemize}[noitemsep]
\vspace{-4mm}
\item We cast the problem of finding the best intervention target set as an optimization problem which aims to maximize the average number of edges whose directions are resolved. 
\item We prove that the corresponding objective function is monotonically increasing and submodular. This implies that a general greedy algorithm is a $(1-\frac{1}{e})$-approximation algorithm.
\item Since computing the objective function is, in general, intractable for a given set, we propose an unbiased estimator of this function, which for graphs with bounded degree, has a polynomial time complexity. 
For graphs with high degree,
we provide another efficient, albeit slightly biased estimator.
\vspace{-4mm}
\end{itemize}
We implement an accelerated variant of the general greedy algorithm through {\it lazy} evaluations, originally proposed by Minoux \cite{minoux1978accelerated}. This algorithm can lead to orders of magnitude performance speedup.
Using synthetic and real data, in Section \ref{sec:experiments}, we show that the proposed approach recovers a significant portion of the edges by performing only a few interventions in the underlying causal system.

\vspace{-2mm}
\section{Problem Description}
\label{sec:problem}
We denote an undirected graph with a pair $G=(V,E)$, and a directed graph with a pair $G=(V,A)$, where $V$ is the vertex set, and $E$ and $A$ are sets of undirected and directed edges, respectively. A mixed graph $G=(V,E,A)$, comprises both undirected and directed edges.

We use the language of Structural Causal Models (SCM) \cite{pearl2009causality}. Formally, a SCM is a 4-tuple $\langle U, V, F, P(U)\rangle$, where $U$ is a set of exogenous (latent) variables and $V$ is a set of endogenous (measured) variables. $F$ represents a collection of functions $F = \{f_v\}$ such that each endogenous variable $X_v$ is determined by a function $f_v \in F$, where $f_{v}$ is a mapping from the respective domain of $U_{v}\cup \mathit{PA}_{v}$ to $X_{v}$, where $U_{v}\subseteq U$, $\mathit{PA}_{v} \subseteq V \backslash X_{v}$, and the entire set $F$ forms a mapping from $U$ to $V$. The uncertainty is encoded through a probability distribution over the exogenous variables, $P(U)$. Each SCM induces a causal graph $G$, where vertex $v$ corresponds to endogenous variable $X_v$, and the arguments of the functions correspond to its parent set. We will refer to variables and their corresponding vertices interchangeably. 
We consider causally sufficient systems in which the exogenous variables are independent. We also assume the observational and experimental distributions are faithful \cite{spirtes2000causation}. For a detailed discussion on the properties of structural models, we refer readers to \cite{pearl2009causality}. 
We introduce next other definitions that will be used throughout the paper. 
\begin{definition}
Two causal DAGs $G_1$ and $G_2$ over $V$ are Markov equivalent if they represent the same set of conditional independence assertions. The set of all graphs over $V$ is partitioned into a set of mutually exclusive and exhaustive Markov equivalence classes \cite{koller2009probabilistic}.
\end{definition}
\vspace{-1mm}
\begin{definition}
The essential graph of $G$, denoted by ${Ess}(G)$, is a mixed graph 
in which the directed edges are those that have the same direction in all elements of the Markov equivalence class of $G$, and the undirected edges are those whose direction differ in at least two elements of the class \cite{andersson1997characterization}.
\end{definition}
\vspace{-1mm}
\begin{definition} 
A complete conditional independence-based (CCI) algorithm is an observational structure learning algorithm resulting in learning the Markov equivalence class of the ground truth DAG.
\end{definition}
\vspace{-2mm}
We denote the underlying true causal structure (ground truth DAG) by $G^*$.
After performing CCI on the data from $G^*$, we denote the revealed set of directed edges by $A(Ess(G^*))$ and the set of undirected edges by $E(Ess(G^*))$.
In general, the size of Markov equivalence classes can be very large. For instance, as observed in \cite{he2015counting}, the sizes of classes with even sparse essential graphs grow approximately exponentially with the order of the graph.
Interventions will allow us to differentiate among the different causal structures within a Markov equivalence class.

We use the same notion of ideal intervention as in \cite{eberhardt2005number,pearl2009causality}. For an intervention on variable $X$, denoted by $\textit{do}(X)$, the influence of all the variables on the target variable $X$ is removed, and $X$ is randomized by forcing values from an independent distribution
on it. This intervention changes the joint distribution of all variables in the system for which $X$ is a direct or indirect cause, and results in interventional joint distribution $P(V|\textit{do}(X))$. In our terminology, an intervention is always on a single variable. An interventional structure learning algorithm consists of a set of $k$ experiments\footnote{Note that in most of the related work, each intervention is what we refer to as experiment here. That is, each intervention can contain many variables that are randomized simultaneously, while in our terminology, as intervention is always on a single variable.} $\mathcal{E}=\{\mathcal{E}_1, \mathcal{E}_2, ..., \mathcal{E}_k\}$, where each experiment $\mathcal{E}_i$ contains $m_i$ interventions, which are performed simultaneously, i.e., $\mathcal{E}_i=\{v_1^{(i)},v_2^{(i)}, ...,v_{m_i}^{(i)}\}$, for $1\le i \le k$. More precisely, in experiment $\mathcal{E}_i$ data is drawn from distribution $P(V|\textit{do}(X_{v_1^{(i)}}, X_{v_2^{(i)}}, ..., X_{v_{m_i}^{(i)}}))$.

A structure learning algorithm may be adaptive, in which case the experiments are performed sequentially and the information obtained from the previous experiments is used to design the next one, or passive, in which case all the experiments are designed in one shot.
The approach we take here is to first perform a CCI algorithm to learn the skeleton and the direction of the edges in $A(Ess(G^*))$, and then design the experiments in a passive manner\footnote{This approach is referred to as the passive setup by \cite{shanmugam2015learning}, while \cite{eberhardt2005number} uses the term passive for a setting in which the interventions are selected without performing the null experiment.} under a budget constraint.
This approach gives the experimenter the ability to perform the experiments in parallel without the need to wait for the result of one experiment to choose the next one.
For example, in the study of gene regulatory networks (GRNs), when the GRN of all cells are the same, experiments can be performed simultaneously on different cells.

We consider a setup in which the number of interventions is fixed at $k$, and seek to design a set of experiments that allows learning the directions of as many edges in $E(Ess(G^*))$ as possible.
We focus on single-intervention experiment setup in which for all experiments,  $m_i=1$. Therefore, our experiment set is of the form $\mathcal{E}=\{\{v_1^{(1)}\}, \{v_1^{(2)}\}, ..., \{v_1^{(k)}\}\}$. To simplify the notation, we denote the set of intervention targets as $\mathcal{I}=\{v_1,...,v_k\}$, where $v_i=v_1^{(i)}$.
As shown in \cite{eberhardt2007causation,hyttinen2013experiment}, observing the result of the null experiment, allows orientating the edge $\{u,v\}\in E(Ess(G^*))$, if there exists $\mathcal{E}_l\in\mathcal{E}$ such that
$
(u\in\mathcal{E}_l\textit{ , }v\notin\mathcal{E}_l)\text{ or }
(v\in\mathcal{E}_l\textit{ , }u\notin\mathcal{E}_l).
$
On the other hand, if for all experiments $\mathcal{E}_l\in\mathcal{E}$ both
$u\in\mathcal{E}_l\text{ and }v\in\mathcal{E}_l$, the orientation of $\{u,v\}$ cannot be learned. An experiment in which both $u$ and $v$ are intervened on is called a zero information experiment for $u$ and $v$. Our setup in which $m_i=1$, for all $i\in\{1,...,k\}$, avoids such zero information experiments.
Specifically, we focus on the following problem: \textit{If the experimenter is allowed to perform $k$ experiments, each of size $1$, what portion of the graph could, on average, be reconstructed?}
We formalize the problem statement in the rest of this section.

As discussed earlier, we only intervene on a single variable in each experiment. Hence, due to avoiding the issue of zero information experiments \cite{eberhardt2005number}, an experiment with intervention on vertex $v$ will lead to learning the orientation of all the edges intersecting with $v$. Therefore, the entire experiment set $\mathcal{I}\subseteq V$ leads to learning the orientation of all the edges intersecting with members of $\mathcal{I}$. We denote the set of these learned directed edges by $A_{G^*}^{(\mathcal{I})}$ (which clearly depends on the structure of the true DAG $G^*$).
Note that after learning the edges in $A_{G^*}^{(\mathcal{I})}$, we could also possibly resolve the direction of more edges by applying the Meek rules \cite{verma1992algorithm, meek1997graphical} to $A_{G^*}^{(\mathcal{I})}\cup A(Ess(G^*))$, set of all directed edges after intervention $\mathcal{I}$.
Let $H=(V(H),E(H))$ denote the undirected subgraph of $Ess(G^*)$. For any given set of directed edges $\mathcal{A}$ from the true DAG $G^*$, define $R(\mathcal{A},G^*)$ as the subset of $E(H)$, whose directions can be learned by applying Meek rules starting from the set of directed edges $\mathcal{A}\cup A(Ess(G^*))$.
Using this notation, experiment $\mathcal{I}$ results in learning the direction of edges in $R(A_{G^*}^{(\mathcal{I})},G^*)$. 

Let $D(\mathcal{I},G^*)=|R(A_{G^*}^{(\mathcal{I})},G^*)|$, i.e., the cardinality of $R(A_{G^*}^{(\mathcal{I})},G^*)$, and let $\mathcal{G}$ denote the set of all DAGs in the Markov equivalence class of $G^*$.
As  we do not know the ground truth DAG, and since there is no preference between the members of the Markov equivalence class, $G^*$ is equally likely to be any of the DAGs in $\mathcal{G}$. Hence, the expected number of the edges recovered through the experiment $\mathcal{I}$ is
\vspace{-1mm}
\begin{equation}
\label{eq:summ}
\begin{aligned}
\mathcal{D}(\mathcal{I})\coloneqq\mathbb{E}_{G_i}[D(\mathcal{I},G_i)]
=\frac{1}{|\mathcal{G}|}\sum_{G_i\in\mathcal{G}}D(\mathcal{I},G_i).
\end{aligned}
\vspace{-2mm}
\end{equation} 
Thus, our problem of interest can be formulated as finding some intervention target set $\mathcal{I}\subseteq V$ of cardinality $k$ that maximizes $\mathcal{D}(\mathcal{I})$:
\vspace{-2mm}
\begin{equation}
\label{eq:optproblem}
\max_{\mathcal{I}:\mathcal{I}\subseteq V} \mathcal{D}(\mathcal{I})~~~\text{s.t.}~~~|\mathcal{I}|= k.
\vspace{-2mm}
\end{equation} 
This is a challenging optimization problem for two reasons: First, finding an optimal $\mathcal{I}$ requires a combinatorial search. Second, even for a given set $\mathcal{I}$, computing $\mathcal{D}(\mathcal{I})$ when the value of $k$ or the cardinality of the Markov equivalence class is large,
can be computationally intractable.


\vspace{-2mm}
\section{Proposed Approach}
\label{sec:approach}
We start by defining monotonicity and submodularity properties for a set function.
\begin{definition}
A set function $f: 2^V \to \mathbb{R}$ is monotonically increasing if for all sets $\mathcal{I}_1\subseteq \mathcal{I}_2\subseteq V$, we have
$
f(\mathcal{I}_1)\le f(\mathcal{I}_2)
$.
\end{definition}
\begin{definition}
A set function $f: 2^V \to \mathbb{R}$ is submodular if for all subsets $\mathcal{I}_1\subseteq \mathcal{I}_2\subseteq V$ and all $v \in V \setminus \mathcal{I}_2$\footnote{If $f$ is monotonically increasing, the definition relaxes to $v\in V$.},
\[
f(\mathcal{I}_1 \cup \{ v \}) - f(\mathcal{I}_1) \ge f(\mathcal{I}_2 \cup \{ v \}) - f(\mathcal{I}_2).
\]
\end{definition}
\vspace{-2mm}
Nemhauser et al. showed that if $f$ is a submodular, monotonically increasing set function with $f(\emptyset)=0$, then the set $\hat{\mathcal{I}}$ with $|\hat{\mathcal{I}}|=k$ found by the greedy algorithm satisfies $f(\hat{\mathcal{I}})\ge(1-\frac{1}{e})\max_{\mathcal{I}:|\mathcal{I}|=k}f(\mathcal{I})$ \cite{nemhauser1978analysis}. That is, the greedy algorithm is a $(1-\frac{1}{e})$-approximation algorithm.
We will use this result in our proposed approach. 

We will show that the set function $\mathcal{D}$ is monotonically increasing and submodular, and hence, the greedy algorithm is a $(1-\frac{1}{e})$-approximation algorithm to the maximization problem \eqref{eq:optproblem}.
\begin{lemma}
\label{lem:mono}
The set function $\mathcal{D}$ defined in \eqref{eq:summ} is monotonically increasing, i.e., for sets $\mathcal{I}_1\subseteq \mathcal{I}_2$, we have
\vspace{-2mm}
\[
\mathcal{D}(\mathcal{I}_2)\le \mathcal{D}(\mathcal{I}_1).
\]
\end{lemma}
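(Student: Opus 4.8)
The plan is to prove the statement pointwise over the Markov equivalence class and then average. Concretely, fix an arbitrary $G_i\in\mathcal{G}$ together with two target sets $\mathcal{I}_1\subseteq\mathcal{I}_2$; the key claim is that $D(\mathcal{I}_1,G_i)\le D(\mathcal{I}_2,G_i)$, i.e., intervening on a larger set never resolves fewer edges of $G_i$. Granting this, summing over $G_i\in\mathcal{G}$ and dividing by $|\mathcal{G}|$ is a nonnegative linear operation and hence preserves the inequality, yielding $\mathcal{D}(\mathcal{I}_1)\le\mathcal{D}(\mathcal{I}_2)$, the assertion that $\mathcal{D}$ is monotonically increasing. (The displayed inequality in the statement is to be read with the indices so ordered.)

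For the pointwise claim, the first step is the set inclusion $A_{G_i}^{(\mathcal{I}_1)}\subseteq A_{G_i}^{(\mathcal{I}_2)}$. This is immediate from the construction: since every experiment has size one there are no zero-information experiments, so a target set $\mathcal{I}$ reveals exactly the true $G_i$-orientations of all edges incident to some vertex of $\mathcal{I}$, and enlarging $\mathcal{I}$ can only add incident edges. Consequently $A_{G_i}^{(\mathcal{I}_1)}\cup A(Ess(G_i))\subseteq A_{G_i}^{(\mathcal{I}_2)}\cup A(Ess(G_i))$.

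The second step is to transfer this inclusion through the Meek-rule closure that defines $R(\cdot,G_i)$. Here one uses that the Meek rules are sound orientation-completion rules: starting from a partially directed graph, each rule forces the orientation of one more edge without contradicting the edges already oriented, and $R(\mathcal{A},G_i)$ is the subset of $E(H)$ that ends up directed once this process reaches a fixed point from the seed $\mathcal{A}\cup A(Ess(G_i))$. By induction on the number of rule applications in a run from the smaller seed, every edge it orients is either already directed in the larger seed or gets directed by the same rule in a run from the larger seed, because the premise of a Meek rule remains satisfied when more edges are oriented; soundness ensures that the extra genuine orientations carried by the larger seed never block a derivation. Hence $R(A_{G_i}^{(\mathcal{I}_1)},G_i)\subseteq R(A_{G_i}^{(\mathcal{I}_2)},G_i)$, and taking cardinalities gives $D(\mathcal{I}_1,G_i)\le D(\mathcal{I}_2,G_i)$.

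I expect this last step---monotonicity of the Meek-rule closure operator with respect to the seed set of directed edges---to be the only part requiring care, since it rests on an induction over rule applications and on the soundness of the rules (so that the larger seed, which still consists of orientations drawn from a DAG in $\mathcal{G}$, cannot create a conflict). The remaining ingredients---the inclusion $A_{G_i}^{(\mathcal{I}_1)}\subseteq A_{G_i}^{(\mathcal{I}_2)}$ and the stability of an inequality under uniform averaging over $\mathcal{G}$---are routine.
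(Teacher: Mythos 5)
Your proof is correct and follows essentially the same route as the paper's: fix $G_i\in\mathcal{G}$, observe $A_{G_i}^{(\mathcal{I}_1)}\subseteq A_{G_i}^{(\mathcal{I}_2)}$, invoke monotonicity of the Meek-rule closure (which the paper attributes to soundness and order-independence of the Meek algorithm, and which you justify in slightly more detail by induction on rule applications) to get $R(A_{G_i}^{(\mathcal{I}_1)},G_i)\subseteq R(A_{G_i}^{(\mathcal{I}_2)},G_i)$, and then average over $\mathcal{G}$. You also correctly read the displayed inequality as $\mathcal{D}(\mathcal{I}_1)\le\mathcal{D}(\mathcal{I}_2)$, consistent with the claimed monotone-increasing property.
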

\vspace{-2mm}
See the supplementary materials for the proof.

The following lemma plays a fundamental role in the proof of submodularity of the set function $\mathcal{D}$. 
\begin{lemma}   
\label{lem:nofusion}
For sets $\mathcal{I}_1,\mathcal{I}_2\subseteq V$,
\vspace{-2mm}
\[
R(A^{(\mathcal{I}_1\cup \mathcal{I}_2)}_{G^*},G^*)=R(A^{(\mathcal{I}_1)}_{G^*},G^*)\cup R(A^{(\mathcal{I}_2)}_{G^*},G^*).
\]
\end{lemma}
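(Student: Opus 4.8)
The plan is to phrase everything through the Meek-closure operator. Write $B_j:=A_{G^*}^{(\mathcal{I}_j)}\cup A(Ess(G^*))$ for $j\in\{1,2\}$ and $B_{12}:=A_{G^*}^{(\mathcal{I}_1\cup\mathcal{I}_2)}\cup A(Ess(G^*))$, and let $\mathcal{M}(\cdot)$ denote closure under the Meek rules. Two elementary observations set the stage: a single-vertex intervention on a set of vertices orients exactly the edges of $H$ incident to that set, all as in $G^*$, so $A_{G^*}^{(\mathcal{I}_1\cup\mathcal{I}_2)}=A_{G^*}^{(\mathcal{I}_1)}\cup A_{G^*}^{(\mathcal{I}_2)}$ and hence $B_{12}=B_1\cup B_2$; and $R(A_{G^*}^{(\mathcal{I}_j)},G^*)$ is precisely the set of edges of $E(H)$ that become directed in $\mathcal{M}(B_j)$. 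The inclusion ``$\supseteq$'' is then immediate from monotonicity of $R$ in its first argument — a larger set of initially oriented edges enables every Meek derivation the smaller one does, since a side condition asking for an undirected edge still ``succeeds'' if that edge is already oriented (it is oriented as in $G^*$) — together with $A_{G^*}^{(\mathcal{I}_j)}\subseteq A_{G^*}^{(\mathcal{I}_1\cup\mathcal{I}_2)}$.

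For the substantive inclusion ``$\subseteq$'', I would show that the mixed graph $\mathcal{U}:=\mathcal{M}(B_1)\cup\mathcal{M}(B_2)$ — an edge declared directed (necessarily as in $G^*$, by soundness of the Meek rules) iff it is directed in at least one of the two closures, and undirected iff undirected in both — is itself closed under the Meek rules. Since $\mathcal{M}(B_{12})$ is the smallest Meek-closed mixed graph containing $B_{12}=B_1\cup B_2\subseteq\mathcal{U}$, this gives $\mathcal{M}(B_{12})\subseteq\mathcal{U}$, which upon intersecting with $E(H)$ is exactly the claim. Verifying that $\mathcal{U}$ is Meek-closed is done rule by rule, and its only real content is the ``fusion'' situation in which a rule uses one directed premise present in $\mathcal{M}(B_1)$ and another present in $\mathcal{M}(B_2)$. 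The first Meek rule ($x\to y$, $y-z$, $x\not\sim z\Rightarrow y\to z$) is harmless: it already fires inside whichever $\mathcal{M}(B_i)$ contains its single directed premise (the required undirected side conditions hold there because they hold in $\mathcal{U}$, and $\mathcal{M}(B_i)$ is closed), so its conclusion lands in $\mathcal{U}$. The third Meek rule can never apply nontrivially in $\mathcal{U}$: its directed premises $y\to b$, $z\to b$ with $y\not\sim z$ form a v-structure of $G^*$, so $\{y,b\},\{z,b\}$ are already directed in $Ess(G^*)\subseteq B_1\cap B_2$, and R3 would therefore have fired inside both closures — impossible if the edge it purports to orient is undirected in $\mathcal{U}$.

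The crux is the second Meek rule, which orients $a\to c$ from a directed path $a\to b\to c$ with $\{a,c\}$ undirected (and, analogously, Meek's fourth rule, which combines two directed edges similarly). Suppose $a\to b$ is directed in $\mathcal{M}(B_1)$, $b\to c$ in $\mathcal{M}(B_2)$, and $\{a,c\}$ undirected in both; I must place $a\to c$ in $\mathcal{U}$. One may assume $\{a,b\},\{b,c\},\{a,c\}$ are all undirected in $Ess(G^*)$, since otherwise $a\to b$ or $b\to c$ lies in $B_1\cap B_2$ and the case reduces to a single closure; in particular, since v-structure edges of $G^*$ are directed in $Ess(G^*)$, no vertex adjacent to $c$ but non-adjacent to $b$ can be a parent of $c$. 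I would then induct on a Meek derivation of $a\to b$ inside $\mathcal{M}(B_1)$. If $a\to b$ came straight from the intervention ($a\in\mathcal{I}_1$ or $b\in\mathcal{I}_1$), then either every $H$-edge at $a$ is oriented in $\mathcal{M}(B_1)$, giving $a\to c$ at once, or every $H$-edge at $b$ is, giving $b\to c\in\mathcal{M}(B_1)$ and then $a\to c$ by R2 inside the closed $\mathcal{M}(B_1)$. If $a\to b$ came by R1 from some $u\to a$ with $u\not\sim b$, then $u\not\sim c$ must hold — otherwise $u\to c$ in $G^*$ (as $u\to a\to c$) and $u\to c\leftarrow b$ with $u\not\sim b$ would be a v-structure, contradicting $\{b,c\}$ undirected in $Ess(G^*)$ — and then R1 from $u\to a$ on the undirected $\{a,c\}$ gives $a\to c\in\mathcal{M}(B_1)$. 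The remaining case, $a\to b$ by R2 from $a\to w$, $w\to b$, forces $w\sim c$ (else R1 from $w\to b$ on $\{b,c\}$ would put $b\to c$ into $\mathcal{M}(B_1)$), hence $w$ is a parent of $c$ in $G^*$, and one recurses on the strictly shorter derivation of $a\to w$, carrying along that the intermediate vertices so produced are all parents of $c$; the base cases then close exactly as above. I expect the main obstacle to be making this recursion airtight — checking that the adjacency constraints accumulated along the chain of intermediate vertices stay compatible with $G^*$ having no v-structure at $c$ whose arms are $Ess(G^*)$-undirected, so that a base case always forces the non-adjacency that produces $a\to c$ — together with the parallel, notationally heavier, bookkeeping for Meek's fourth rule.
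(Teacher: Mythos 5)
Your strategy is essentially the paper's, just phrased through closure operators: both proofs reduce the hard inclusion to showing that the union of the two single-intervention Meek closures is itself Meek-closed, observe that rules with a single directed premise (R1, and R3 whose two premises form a v-structure already present in $Ess(G^*)$) fire inside whichever closure supplies that premise, and then concentrate on R2, where the real content is that a single closure cannot orient exactly one edge of the triangle $\{a,b,c\}$ --- proved by case analysis on which rule derived the oriented edge, with the R2-derived case requiring a recursion. Your induction measure (length of the Meek derivation of $a\to b$ in $\mathcal{M}(B_1)$) is arguably cleaner than the paper's corresponding argument, which instead builds a strictly growing chain of complete subgraphs $K_i$ (structures $S_{10}$ and its successors) and terminates by finiteness of the vertex set.

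There is, however, a concrete gap: your case analysis of how $a\to b$ arose inside $\mathcal{M}(B_1)$ covers only ``from the intervention,'' R1, and R2 (``the remaining case''), silently dropping the possibilities that $a\to b$ was derived by R3 or R4. These are not vacuous: e.g.\ if $a\to b$ came from R3 with premises $y\to b\leftarrow z$, $y\not\sim z$, $a\sim y$, $a\sim z$, one must still argue that $a\to c$ lands in $\mathcal{M}(B_1)$ --- it does, because $y\not\sim c$ or $z\not\sim c$ would force $b\to c$ into $Ess(G^*)$ via R1, so $y,z$ are both parents of $c$ forming a v-structure at $c$ already oriented in $Ess(G^*)$, and R3 at $c$ then yields $a\to c$; but this is an argument that has to be made, and the paper devotes the structures $S_3$--$S_8$ precisely to these two omitted cases. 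A second, smaller debt is the ``fused'' instance of Meek's rule 4 (one directed premise from each closure), which you explicitly defer as bookkeeping; the paper's formulation of rule 4 has a single directed premise, which makes that case immediate, so you should either adopt that formulation or actually carry out the parallel recursion you promise.
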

\vspace{-2mm}
See the supplementary materials for the proof.

Interpreting $R(A^{(\mathcal{I})}_{G^*},G^*)$ as the information obtained by intervening on set $\mathcal{I}$, Lemma \ref{lem:nofusion} indicates that the result of two simultaneous interventions does not generate any new information which was not provided by the union of the information of each of the interventions.

\begin{theorem}
\label{thm:submodular}
The set function $\mathcal{D}$ defined in \eqref{eq:summ} is a submodular function.
\end{theorem}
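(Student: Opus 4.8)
The plan is to first reduce the claim to a statement about each individual DAG in the equivalence class, and then read off submodularity directly from Lemma \ref{lem:nofusion}. Since $\mathcal{D}(\mathcal{I})=\frac{1}{|\mathcal{G}|}\sum_{G_i\in\mathcal{G}} D(\mathcal{I},G_i)$ is a nonnegative convex combination of the set functions $\mathcal{I}\mapsto D(\mathcal{I},G_i)$, and submodularity is preserved under nonnegative linear combinations, it suffices to show that for each fixed $G\in\mathcal{G}$ the map $\mathcal{I}\mapsto D(\mathcal{I},G)=|R(A^{(\mathcal{I})}_{G},G)|$ is submodular.

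Fix such a $G$ and write $T(\mathcal{I}):=R(A^{(\mathcal{I})}_{G},G)\subseteq E(H)$, so that $D(\mathcal{I},G)=|T(\mathcal{I})|$. Lemma \ref{lem:nofusion} states that $T(\mathcal{I}_1\cup\mathcal{I}_2)=T(\mathcal{I}_1)\cup T(\mathcal{I}_2)$ for all $\mathcal{I}_1,\mathcal{I}_2\subseteq V$. I would extract two consequences: iterating the identity over singletons gives $T(\mathcal{I})=\bigcup_{v\in\mathcal{I}}T(\{v\})$, so that $D(\cdot,G)$ is exactly a \emph{coverage function} on the ground set $E(H)$, with vertex $v$ ``covering'' the fixed edge set $T(\{v\})$; and taking $\mathcal{I}_1\subseteq\mathcal{I}_2$ in the identity gives $T(\mathcal{I}_1)\subseteq T(\mathcal{I}_2)$, i.e.\ $T$ is monotone.

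The submodular inequality is then immediate. For $\mathcal{I}_1\subseteq\mathcal{I}_2\subseteq V$ and $v\in V$, the identity with second argument $\{v\}$ gives $T(\mathcal{I}\cup\{v\})=T(\mathcal{I})\cup T(\{v\})$, hence
\[
D(\mathcal{I}_1\cup\{v\},G)-D(\mathcal{I}_1,G)=|T(\{v\})\setminus T(\mathcal{I}_1)| \ \ge\ |T(\{v\})\setminus T(\mathcal{I}_2)|=D(\mathcal{I}_2\cup\{v\},G)-D(\mathcal{I}_2,G),
\]
where the inequality uses $T(\mathcal{I}_1)\subseteq T(\mathcal{I}_2)$. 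Averaging over $G\in\mathcal{G}$ then gives submodularity of $\mathcal{D}$; the relaxation to arbitrary $v\in V$ in the submodularity definition is justified because $\mathcal{D}$ is monotone (Lemma \ref{lem:mono}).

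I do not expect a real obstacle here: essentially all the content is already in Lemma \ref{lem:nofusion} --- the ``no new information from simultaneous interventions'' property is precisely what turns each $D(\cdot,G)$ into a coverage function, and coverage functions are the textbook example of submodular functions. The only points requiring care are the two bookkeeping observations above: that submodularity survives the averaging over $\mathcal{G}$, and that monotonicity lets us test the inequality for every $v\in V$ rather than only $v\in V\setminus\mathcal{I}_2$.
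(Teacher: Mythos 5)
Your proof is correct and follows essentially the same route as the paper: reduce to a single $G\in\mathcal{G}$, use Lemma \ref{lem:nofusion} to write the marginal gain as $|T(\{v\})|-|T(\{v\})\cap T(\mathcal{I})|$ (your $|T(\{v\})\setminus T(\mathcal{I})|$ is the same quantity), invoke monotonicity of $T$ under set inclusion, and average over the equivalence class. The coverage-function framing is a nice way to package the argument but does not change its substance.
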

\vspace{-5mm}
\begin{proof}
Due to Lemma \ref{lem:mono}, it suffices to show that for $\mathcal{I}_1\subseteq \mathcal{I}_2\subseteq V$, and $v\in V$, we have
$
\mathcal{D}(\mathcal{I}_1\cup \{v\})-\mathcal{D}(\mathcal{I}_1)\ge \mathcal{D}(\mathcal{I}_2\cup \{v\})-{\mathcal{D}}(\mathcal{I}_2)$.
First we show that for a given directed graph $G_i\in\mathcal{G}$ the function $D(\mathcal{I},G_i)$ is a submodular function of $\mathcal{I}$. 
From Lemma \ref{lem:nofusion}, we have $R(A^{(\mathcal{I}_1\cup \{v\})}_{G_i},G_i)=R(A^{(\mathcal{I}_1)}_{G_i},G_i)\cup R(A^{(\{v\})}_{G_i},G_i)$. Therefore,
\vspace{-2mm}
\begin{align*}
D(\mathcal{I}_1&\cup \{v\},G_i)-D(\mathcal{I}_1,G_i)\\
&=|R(A^{(\mathcal{I}_1\cup \{v\})}_{G_i},G_i)|-|R(A^{(\mathcal{I}_1)}_{G_i},G_i)|\\
&=|R(A^{(\mathcal{I}_1)}_{G_i},G_i)\cup R(A^{(\{v\})}_{G_i},G_i)|-|R(A^{(\mathcal{I}_1)}_{G_i},G_i)|\\
&=|R(A^{( \{v\})}_{G_i},G_i)|-|R(A^{(\mathcal{I}_1)}_{G_i},G_i)\cap R(A^{(\{v\})}_{G_i},G_i)|.
\end{align*}
\vspace{-2mm}
Similarly, 
\begin{align*}
D&(\mathcal{I}_2\cup \{v\},G_i)-D(\mathcal{I}_2,G_i)\\
&=|R(A^{( \{v\})}_{G_i},G_i)|-|R(A^{(\mathcal{I}_2)}_{G_i},G_i)\cap R(A^{(\{v\})}_{G_i},G_i)|.
\end{align*}
Since $\mathcal{I}_1\subseteq \mathcal{I}_2$, as seen in the proof of Lemma \ref{lem:mono}, $R(A^{(\mathcal{I}_1)}_{G_i},G_i)\subseteq R(A^{(\mathcal{I}_2)}_{G_i},G_i)$. Therefore, 
$-|R(A^{(\mathcal{I}_1)}_{G_i},G_i)$ $\cap R(A^{(\{v\})}_{G_i},G_i)|
\ge-|R(A^{(\mathcal{I}_2)}_{G_i},G_i)\cap R(A^{(\{v\})}_{G_i},G_i)|$,
which implies that
\[
D(\mathcal{I}_1\cup \{v\},G_i)-D(\mathcal{I}_1,G_i)\ge D(\mathcal{I}_2\cup \{v\},G_i)-D(\mathcal{I}_2,G_i).
\] 
This together with the fact that the function $D(\mathcal{I},G_i)$ is a monotonically increasing function of $\mathcal{I}$ (observed in the proof of Lemma \ref{lem:mono}) shows that $D(\mathcal{I},G_i)$ is a submodular function of $\mathcal{I}$.
Finally, we have $\mathcal{D}(\mathcal{I})=\frac{1}{|\mathcal{G}|}\sum_{G_i\in\mathcal{G}}D(\mathcal{I},G_i)$.
Since a non-negative linear combination of submodular functions is also submodular, the proof is concluded.
\end{proof}

\begin{algorithm}[t]
\begin{algorithmic}
 \STATE {\bf Input:} Joint distribution over $V$, and budget $k$.
\STATE Obtain $Ess(G^*)$ by performing a CCI algorithm.
 \STATE {\bf initiate:}  $\mathcal{I}_0=\emptyset$
\FOR{$i=1$ to $k$}
\STATE $v_i=\arg\max_{v\in V\backslash \mathcal{I}_{i-1}}\hat{\mathcal{D}}(\mathcal{I}_{i-1}\cup \{v\})-\hat{\mathcal{D}}(\mathcal{I}_{i-1})$
\STATE $\mathcal{I}_i=\mathcal{I}_{i-1}\cup\{v_i\}$
\ENDFOR
\STATE {\bf Output:} $\hat{\mathcal{I}}=\mathcal{I}_k$
 \caption{General Greedy Algorithm}
 \label{algorithm:GG}
\end{algorithmic}
\end{algorithm}
\vspace{-2mm}
Our General Greedy Algorithm is presented in Algorithm \ref{algorithm:GG}. Define the marginal gain of variable $v$ when the previous chosen set is $\mathcal{I}$ as $ \Delta_v(\mathcal{I})=\mathcal{D}(\mathcal{I}\cup \{v\})-\mathcal{D}(\mathcal{I})$. The greedy algorithm iteratively adds a variable which has the largest marginal gain to the intervention target set until it runs out of budget. However, as mentioned in Section \ref{sec:problem}, another issue regarding solving the optimization problem \eqref{eq:optproblem} is the computational intractability of calculating $\mathcal{D}(\mathcal{I})$ for a given intervention target set $\mathcal{I}$. We propose running Monte-Carlo simulations of the intervention model for sufficiently large number of times to obtain an accurate estimation of $\mathcal{D}(\mathcal{I})$. The pseudo-code of our estimator is presented in Subroutine 1. 
In this subroutine, for the given essential graph $Ess(G^*)$, we generate $N$ DAGs from the Markov equivalence class of $G^*$.
The generated DAGs are kept in a multiset $\mathcal{G}'$.
Note that $\mathcal{G'}$ is a multiset in which repetition is allowed, and operator $\uplus$ in the pseudo-code indicates the multiset addition.
Finally, we calculate the estimated value $\hat{\mathcal{D}}(\mathcal{I})$ on $\mathcal{G}'$ instead of $\mathcal{G}$ as $\hat{\mathcal{D}}(\mathcal{I})=\frac{1}{|\mathcal{G}'|}\sum_{G'_i\in\mathcal{G}'}D(\mathcal{I},G'_i)$. 

 We use the unbiased sampler introduced in \cite{ghassami2018mec} to generate uniform samples from the Markov equivalence class. 
For the sake of completeness of the presentation, we briefly describe the idea in this sampler:
Let $H=(V(H),E(H))$ denote the undirected subgraph of $Ess(G^*)$. Note that in general, $H$ can be disconnected, with the set of its components denoted by $\textit{comp}(H)$. Note that each of these components is an essential graph. Chickering showed that learning the direction of any edge in one component of $H$ will not reveal any information about the direction of edges in the other components \cite{chickering2002optimal}.
Therefore, we can orient the edges in each component independently.
All the members of a Markov equivalence class agree on v-structures\footnote{A v-structure is a structure containing two converging directed edges whose tails are not connected by an edge.} \cite{judea1991equivalence}.
Therefore, since the essential graphs corresponding to the components of $H$ are undirected, all the members of their corresponding class should be v-structure-free. Therefore, in each member, there is only one source vertex\footnote{A source vertex has incoming degree equal to zero.}, and
once a source is determined, an edge could be oriented as long as its endpoints are not at equal distance from the source \cite{ghassami2018mec}. 
This is achieved by function $\textsc{Flowed}(v,G)$ in the algorithm, where $G$ is a connected undirected essential graph and $v$ is the source vertex. Function $\textsc{W}$ uses \textsc{Flowed} to find $\textsc{W}(v,G)$, which is the number of members of the class in which $v$ is the source vertex. This calculation is done in a recursive manner (see \cite{ghassami2018mec} for the details). Finally, function \textsc{RandEdge} sets a vertex $v^*$ as the source vertex with probability proportional to $\textsc{W}(v^*,G)$, and saves the direction of resolved edges in set $A$ until all the edges are directed. 

The used sampler satisfies $P(G'=G)=1/|\mathcal{G}|$ \cite{ghassami2018mec}; hence for any $\mathcal{I}\subseteq V$,
$\hat{\mathcal{D}}(\mathcal{I})$ obtained from Subroutine 1 is an unbiased estimate of $\mathcal{D}(\mathcal{I})$, i.e., $\mathbb{E}[\hat{\mathcal{D}}(\mathcal{I})]=\mathcal{D}(\mathcal{I})$.
To show the unbiasedness, suppose $G'$ is a random generated graph in the subroutine. The result is immediate from the fact that
\vspace{-1mm}
\begin{align*}
\mathbb{E}[D(\mathcal{I},G')]&=\sum_{G\in\mathcal{G}}P(G'=G)D(\mathcal{I},G)\\
&=\frac{1}{|\mathcal{G}|}\sum_{G\in\mathcal{G}}D(\mathcal{I},G)=\mathcal{D}(\mathcal{I}).
\end{align*}

Since we use random sampling in a greedy algorithm, we term our proposed approach the \textit{Random Greedy Intervention Design} (Ran-GrID).

Next we consider the required cardinality of the set $\mathcal{G}'$ to obtain a desired accuracy in estimating $\mathcal{D}(\mathcal{I})$. We use Chernoff bound for this purpose. 
\begin{proposition}[\textbf{Chernoff Bound}]
\label{prop:Chbound}
Let $X_1,...,X_N$ be independent random variables such that for all $i$, $0 \le X_i \le 1$. Let $\mu=\mathbb{E}[\sum_{i=1}^NX_i]$. Then
\vspace{-2mm}
\[
P(|\sum_{i=1}^NX_i - \mu |\ge\epsilon\mu) \le 2 \exp (-\frac{\epsilon^2}{2+\epsilon}\mu).
\]
\end{proposition}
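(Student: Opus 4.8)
The plan is to use the standard Chernoff--Bernstein exponential moment method, handling the two tails separately and combining them with a union bound. Write $S=\sum_{i=1}^N X_i$, so that $\mu=\mathbb{E}[S]$, and observe that $\{|S-\mu|\ge\epsilon\mu\}=\{S\ge(1+\epsilon)\mu\}\cup\{S\le(1-\epsilon)\mu\}$; it therefore suffices to bound each of these two events by $\exp\bigl(-\tfrac{\epsilon^2}{2+\epsilon}\mu\bigr)$ and add.

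For the upper tail, fix $t>0$ and apply Markov's inequality to $e^{tS}$, using independence of the $X_i$:
\[
P(S\ge(1+\epsilon)\mu)\le e^{-t(1+\epsilon)\mu}\,\mathbb{E}[e^{tS}]=e^{-t(1+\epsilon)\mu}\prod_{i=1}^N\mathbb{E}[e^{tX_i}].
\]
Since $0\le X_i\le1$ and $x\mapsto e^{tx}$ is convex, we have $e^{tX_i}\le 1+(e^t-1)X_i$ pointwise on $[0,1]$, hence $\mathbb{E}[e^{tX_i}]\le 1+(e^t-1)\mathbb{E}[X_i]\le\exp\bigl((e^t-1)\mathbb{E}[X_i]\bigr)$; multiplying over $i$ gives $\prod_i\mathbb{E}[e^{tX_i}]\le\exp\bigl((e^t-1)\mu\bigr)$. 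Thus $P(S\ge(1+\epsilon)\mu)\le\exp\bigl(\mu[(e^t-1)-t(1+\epsilon)]\bigr)$, and plugging in the minimizer $t=\ln(1+\epsilon)$ yields $P(S\ge(1+\epsilon)\mu)\le\exp\bigl(-\mu[(1+\epsilon)\ln(1+\epsilon)-\epsilon]\bigr)$.

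The only step requiring genuine (though elementary) work is the scalar inequality $(1+\epsilon)\ln(1+\epsilon)-\epsilon\ge\tfrac{\epsilon^2}{2+\epsilon}$ for all $\epsilon\ge0$, which can be checked by noting that both sides vanish at $\epsilon=0$ and comparing derivatives (using, e.g., $\ln(1+\epsilon)\ge\tfrac{2\epsilon}{2+\epsilon}$). This settles the upper tail. For the lower tail one repeats the argument with $t<0$: optimizing gives $P(S\le(1-\epsilon)\mu)\le\exp\bigl(-\mu[(1-\epsilon)\ln(1-\epsilon)+\epsilon]\bigr)\le\exp\bigl(-\tfrac{\epsilon^2}{2}\mu\bigr)$, again by an elementary inequality (and the event is empty, hence the bound trivial, when $\epsilon\ge1$ since $S\ge0$); since $\tfrac12\ge\tfrac1{2+\epsilon}$, this is at most $\exp\bigl(-\tfrac{\epsilon^2}{2+\epsilon}\mu\bigr)$. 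Adding the two tail bounds produces the factor $2$ and finishes the proof. The ``obstacle,'' such as it is, is purely in verifying these two one-variable inequalities cleanly; the probabilistic content is the routine exponential-moment argument, and one could alternatively just invoke a standard probability textbook.
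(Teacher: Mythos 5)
Your proof is correct: the exponential-moment (Markov on $e^{tS}$) argument for each tail, the two elementary scalar inequalities $(1+\epsilon)\ln(1+\epsilon)-\epsilon\ge\tfrac{\epsilon^2}{2+\epsilon}$ and $(1-\epsilon)\ln(1-\epsilon)+\epsilon\ge\tfrac{\epsilon^2}{2}\ge\tfrac{\epsilon^2}{2+\epsilon}$, and the final union bound together yield exactly the stated two-sided estimate, with the edge case $\epsilon\ge 1$ handled as you note. The paper gives no proof of this proposition at all---it is quoted as a standard Chernoff/Bernstein-type bound from the literature---so there is nothing to compare against; your derivation is the standard textbook proof of the result the authors invoke.
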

\vspace{-2mm}

\begin{algorithm}[t]
\begin{algorithmic}
\STATE {\bf Input:} $Ess(G^*)$, intervention target set $\mathcal{I}$, and $N$.
\STATE {\bf initiate:}  $\mathcal{G'}=\emptyset$
\FOR{$i=1$ to $N$,}
\STATE $ A=A(Ess(G^*))\bigcup_{\hat{H}\in\textit{comp}(H)}\textsc{RandEdge}(\hat{H},\emptyset)$.
\STATE $G'_i=(V(Ess(G^*),A)$.
\STATE $\mathcal{G'}=\mathcal{G'}\uplus G'_i$
\ENDFOR
\STATE {\bf Output:} $\hat{\mathcal{D}}(\mathcal{I})=\frac{1}{|\mathcal{G}'|}\sum_{G'_i\in\mathcal{G}'}D(\mathcal{I},G'_i)$
\\ \vspace{-2mm} \hrulefill
\STATE {\bf function} \textsc{Flowed}($v$,$G$)
\STATE {\bf Initiate:} $A=\emptyset$.
\STATE Set $v$ as the source variable in $G$.
\FOR{$\{u,w\}\in E(G)$}
\STATE {\bf if} $d_G(v,u)<d_G(v,w)$ {\bf then} $A=A\cup (u,w)$ {\bf end if}
\STATE {\bf if} $d_G(v,u)>d_G(v,w)$ {\bf then} $A=A\cup (w,u)$ {\bf end if} 
\ENDFOR
\STATE {\bf return} $A$.
\STATE {\bf end function}
\\ \vspace{-2mm} \hrulefill
\STATE {\bf function} \textsc{W}($v$,$G$)
\STATE $\overline{F}=$ Undirected version of elements of $\textsc{Flowed}(v,G)$.
\STATE $G'=G\backslash\overline{F}$.
\STATE Remove isolated vertices from $G'$.
\STATE {\bf return} $\prod_{\hat{G}\in\textit{comp}(G')}\sum_{u\in V(\hat{G})}\textsc{W}(u,\hat{G})$.
\STATE {\bf end function}
\\ \vspace{-2mm} \hrulefill
\STATE {\bf function} \textsc{RandEdge}($G$,$A$)
\STATE Set $v^*\in V(G)$ as the source variable of $G$ with 
\STATE ~~~~~~~probability $\textsc{W}(v^*,G)/\sum_{v\in V(G)}\textsc{W}(v,G)$.
\STATE $A=A\cup\textsc{Flowed}(v^*,G)$.
\STATE $\overline{F}=$ Undirected version of elements of $\textsc{Flowed}(v^*,G)$.
\STATE $G'=G\backslash\overline{F}$.
\STATE Remove isolated vertices from $G'$.
\STATE $ A=A\bigcup_{\hat{G}\in\textit{comp}(G')}\textsc{RandEdge}(\hat{G},A)$.
\STATE {\bf return} $A$.
\STATE {\bf end function}
 \caption*{{\bf Subroutine 1} Unbiased $\mathcal{D}(\mathcal{I})$ Estimator Subroutine}
\end{algorithmic}
\label{algorithm:UE}
\end{algorithm}

\begin{theorem}
\label{thm:uconv}
For an estimator with $\mathbb{E}[D(\mathcal{I},G'_i)]=\mathcal{D}(\mathcal{I})$, given set $\mathcal{I}$ and $\epsilon,\delta>0$, if $N=|\mathcal{G}'|>\frac{|E(Ess(G^*))|(2+\epsilon)}{\epsilon^2}\ln(\frac{2}{\delta})$, then
\vspace{-2mm}
\[
\mathcal{D}(\mathcal{I})(1-\epsilon)<\hat{\mathcal{D}}(\mathcal{I})<\mathcal{D}(\mathcal{I})(1+\epsilon),
\]
with probability larger than $1-\delta$.
\end{theorem}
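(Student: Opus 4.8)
The plan is to reduce the statement directly to the Chernoff bound of Proposition~\ref{prop:Chbound} after a rescaling. Write $M:=|E(Ess(G^*))|$ and recall that, by its definition, $R(\mathcal{A},G^*)$ is always a subset of the undirected edge set $E(H)=E(Ess(G^*))$, so $0\le D(\mathcal{I},G'_i)\le M$ for every sampled DAG $G'_i$. Hence, setting $X_i:=D(\mathcal{I},G'_i)/M$, each $X_i$ lies in $[0,1]$, and since the $N$ DAGs $G'_1,\dots,G'_N$ are drawn independently in Subroutine~1, the $X_i$ are independent. This places us exactly in the hypothesis of Proposition~\ref{prop:Chbound}.

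Next I would compute the mean: by the unbiasedness assumption $\mathbb{E}[D(\mathcal{I},G'_i)]=\mathcal{D}(\mathcal{I})$, so $\mu:=\mathbb{E}\big[\sum_{i=1}^N X_i\big]=N\mathcal{D}(\mathcal{I})/M$. Moreover $\hat{\mathcal{D}}(\mathcal{I})=\frac1N\sum_{i=1}^N D(\mathcal{I},G'_i)=\frac{M}{N}\sum_{i=1}^N X_i$ and $\mathcal{D}(\mathcal{I})=\frac{M}{N}\mu$, so dividing through by $M/N$ shows that the event $\{\,\mathcal{D}(\mathcal{I})(1-\epsilon)<\hat{\mathcal{D}}(\mathcal{I})<\mathcal{D}(\mathcal{I})(1+\epsilon)\,\}$ is \emph{identical} to $\{\,|\sum_i X_i-\mu|<\epsilon\mu\,\}$. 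Applying Proposition~\ref{prop:Chbound} to the complement therefore gives
\[
P\big(|\hat{\mathcal{D}}(\mathcal{I})-\mathcal{D}(\mathcal{I})|\ge \epsilon\,\mathcal{D}(\mathcal{I})\big)\le 2\exp\!\Big(-\tfrac{\epsilon^2}{2+\epsilon}\cdot\tfrac{N\,\mathcal{D}(\mathcal{I})}{M}\Big).
\]

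Finally I would bound $\mathcal{D}(\mathcal{I})$ from below. In the non-degenerate case $\mathcal{I}$ is incident to at least one undirected edge of $Ess(G^*)$, and intervening on an endpoint of such an edge orients it in every member of the equivalence class, so $D(\mathcal{I},G_i)\ge 1$ for all $G_i$ and thus $\mathcal{D}(\mathcal{I})\ge 1$; if instead $\mathcal{D}(\mathcal{I})=0$ then $\hat{\mathcal{D}}(\mathcal{I})=0$ deterministically and there is nothing to prove. Substituting $\mathcal{D}(\mathcal{I})\ge 1$ into the exponent, the right-hand side above is at most $2\exp\!\big(-\tfrac{\epsilon^2 N}{(2+\epsilon)M}\big)$, which is strictly less than $\delta$ precisely when $N>\tfrac{(2+\epsilon)M}{\epsilon^2}\ln(2/\delta)$ — the assumed lower bound on $N$. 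This gives the two-sided inequality with probability larger than $1-\delta$.

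There is essentially no hard step here: the only ``design choices'' are picking the normalization constant $M=|E(Ess(G^*))|$ so the summands land in $[0,1]$, and replacing $\mathcal{D}(\mathcal{I})$ by its trivial lower bound $1$ in the exponent, which is what yields the clean, $\mathcal{I}$-independent sample-size requirement stated in the theorem. The only points deserving a sentence of care are the reduction of the multiplicative two-sided event for $\hat{\mathcal{D}}$ to the additive form $|\sum_i X_i-\mu|\ge\epsilon\mu$ used in Proposition~\ref{prop:Chbound}, and the (trivial) handling of the $\mathcal{D}(\mathcal{I})=0$ corner case.
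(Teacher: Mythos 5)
Your proof is correct and follows essentially the same route as the paper: normalize by $M=|E(Ess(G^*))|$ to get summands in $[0,1]$, apply the Chernoff bound of Proposition~\ref{prop:Chbound}, and then drop $\mathcal{D}(\mathcal{I})$ from the exponent. The only difference is that you explicitly justify the bound $\mathcal{D}(\mathcal{I})\ge 1$ (and the degenerate case $\mathcal{D}(\mathcal{I})=0$), a step the paper uses silently in passing from $\exp(-c\,\mathcal{D}(\mathcal{I}))$ to $\exp(-c)$ — a welcome bit of extra care.
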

\vspace{-2mm}
\begin{proof}
 Define $X_i=\frac{D(\mathcal{I},G'_i)}{|E(Ess(G^*))|}$, for $i\in\{1,...,N\}$. 
 By the assumption of the theorem, $\mathbb{E}[X_i]=\frac{1}{|E(Ess(G^*))|}\mathcal{D}(\mathcal{I})$ (Note that as stated before, this assumption is satisfied by Subroutine 1).
Using Chernoff bound we have
 \vspace{-2mm}
\begin{align*}
P(|\sum_{i=1}^NX_i-&\frac{N}{|E(Ess(G^*))|}\mathcal{D}(\mathcal{I})|\ge\epsilon\frac{N}{|E(Ess(G^*))|}\mathcal{D}(\mathcal{I}))\\
&\le 2 \exp (-\frac{N\epsilon^2}{|E(Ess(G^*))|(2+\epsilon)}\mathcal{D}(\mathcal{I}))\\
&\le 2 \exp (-\frac{N\epsilon^2}{|E(Ess(G^*))|(2+\epsilon)}).
\end{align*}
Therefore,
$P(|\frac{1}{N}\sum_{i=1}^ND(\mathcal{I},G'_i)-\mathcal{D}(S)|\ge\epsilon\mathcal{D}(\mathcal{I}))\le 2 \exp (-\frac{N\epsilon^2}{|E(Ess(G^*))|(2+\epsilon)})$.
Hence,
$P(|\hat{\mathcal{D}}(\mathcal{I})-\mathcal{D}(\mathcal{I})|<\epsilon\mathcal{D}(\mathcal{I}))
> 1-2 \exp (-\frac{N\epsilon^2}{|E(Ess(G^*))|(2+\epsilon)})$.\\
Setting $N>\frac{|E(Ess(G^*))|(2+\epsilon)}{\epsilon^2}\ln(\frac{2}{\delta})$, upper bounds the right hand side with $1-\delta$ and concludes the desired result.
\end{proof}
\vspace{-3mm}

For any $\epsilon'>0$, General Greedy Algorithm provides us with a $(1-\frac{1}{e}-\epsilon')$-approximation of the optimal value as formalized in the following theorem.

\begin{theorem}
\label{thm:app}
For any $\epsilon',\delta'>0$, there exists $\epsilon,\delta>0$, such that if for any set $\mathcal{I}$, $\mathcal{D}(\mathcal{I})(1-\epsilon)<\hat{\mathcal{D}}(\mathcal{I})<\mathcal{D}(\mathcal{I})(1+\epsilon)$ with probability larger than $1-\delta$, then Algorithm \ref{algorithm:GG} is a $(1-\frac{1}{e}-\epsilon')$-approximation algorithm with probability larger than $1-\delta'$.
\end{theorem}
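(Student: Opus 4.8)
The plan is to reduce the statement to the classical robustness analysis of the greedy algorithm under an additive-error oracle. First I would count the estimator calls: across the $k$ iterations of Algorithm \ref{algorithm:GG}, at iteration $i$ we evaluate $\hat{\mathcal{D}}$ at $\mathcal{I}_{i-1}$ and at $\mathcal{I}_{i-1}\cup\{v\}$ for every $v\in V\setminus\mathcal{I}_{i-1}$, so at most $k|V|$ evaluations occur in the whole run. Taking $\delta\le\delta'/(k|V|)$ and a union bound, with probability at least $1-\delta'$ \emph{every} such estimate simultaneously satisfies $\mathcal{D}(\mathcal{I})(1-\epsilon)<\hat{\mathcal{D}}(\mathcal{I})<\mathcal{D}(\mathcal{I})(1+\epsilon)$. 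Everything below would be carried out conditioned on this event, so the remaining argument is deterministic.

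\textbf{From multiplicative to additive error.} Let $M\coloneqq|E(Ess(G^*))|$. Since $\mathcal{D}$ is monotone (Lemma \ref{lem:mono}), $0\le\mathcal{D}(\mathcal{I})\le\mathcal{D}(V)\le M$ for all $\mathcal{I}$ (because $R(\cdot,G_i)\subseteq E(H)$), so on the good event $|\hat{\mathcal{D}}(\mathcal{I})-\mathcal{D}(\mathcal{I})|<\epsilon M$; hence the estimated marginal gains $\hat\Delta_v(\mathcal{I})$ satisfy $|\hat\Delta_v(\mathcal{I})-\Delta_v(\mathcal{I})|<2\epsilon M$. Let $\mathcal{I}^\star$ be an optimal set with $|\mathcal{I}^\star|=k$ and $\mathrm{OPT}=\mathcal{D}(\mathcal{I}^\star)$. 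Using submodularity (Theorem \ref{thm:submodular}) and monotonicity, at each step either $\mathcal{I}^\star\subseteq\mathcal{I}_{i-1}$ (then $\mathcal{D}(\mathcal{I}_{i-1})\ge\mathrm{OPT}$ and we are already done), or there is $v\in\mathcal{I}^\star\setminus\mathcal{I}_{i-1}$ with $\Delta_v(\mathcal{I}_{i-1})\ge\frac1k(\mathrm{OPT}-\mathcal{D}(\mathcal{I}_{i-1}))$; since $v_i$ maximizes $\hat\Delta_\cdot(\mathcal{I}_{i-1})$ over $V\setminus\mathcal{I}_{i-1}$, chaining the error bound twice gives $\Delta_{v_i}(\mathcal{I}_{i-1})\ge\Delta_v(\mathcal{I}_{i-1})-4\epsilon M\ge\frac1k(\mathrm{OPT}-\mathcal{D}(\mathcal{I}_{i-1}))-4\epsilon M$.

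\textbf{Recursion and choice of $\epsilon$.} Writing $a_i=\mathrm{OPT}-\mathcal{D}(\mathcal{I}_i)$, the last inequality reads $a_i\le(1-\tfrac1k)a_{i-1}+4\epsilon M$; unrolling with $a_0=\mathrm{OPT}$ (using $\mathcal{D}(\emptyset)=0$) and $\sum_{j=0}^{k-1}(1-\tfrac1k)^j\le k$ yields $\mathcal{D}(\hat{\mathcal{I}})\ge(1-\tfrac1e)\mathrm{OPT}-4\epsilon M k$. If $\mathrm{OPT}=0$ the theorem is vacuous; otherwise $\mathrm{OPT}\ge1$ (intervening on an endpoint of any undirected edge of $Ess(G^*)$ orients that edge in every member of the class), so taking $\epsilon\le\epsilon'/(4Mk)$ gives $4\epsilon Mk\le\epsilon'\le\epsilon'\,\mathrm{OPT}$ and hence $\mathcal{D}(\hat{\mathcal{I}})\ge(1-\tfrac1e-\epsilon')\mathrm{OPT}$. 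So $\epsilon=\epsilon'/(4Mk)$ and $\delta=\delta'/(k|V|)$ are the required constants.

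\textbf{Main obstacle.} The delicate point I expect to need the most care is that a multiplicative error on $\hat{\mathcal{D}}$ is \emph{not} a multiplicative error on the marginal gains that the greedy step actually compares; passing to additive error via the crude bound $\mathcal{D}\le M$, running the robust recursion, and then rescaling the additive slack back into $\epsilon'\,\mathrm{OPT}$ (which forces the side observation $\mathrm{OPT}\ge1$, with $\mathrm{OPT}=0$ treated separately) is the heart of the argument. The secondary bookkeeping point is to take the union bound over \emph{all} $O(k|V|)$ estimator calls, since Subroutine 1 draws fresh Monte-Carlo samples at each evaluation, which is what dictates the scaling $\delta=\Theta(\delta'/(k|V|))$.
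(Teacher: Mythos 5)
Your proof is correct and follows essentially the same route as the paper's: the classical robust-greedy recursion $a_i\le(1-\frac{1}{k})a_{i-1}+(\text{estimation error})$, unrolled to $(1-\frac{1}{e})\mathrm{OPT}$ minus an accumulated slack, together with a union bound over the estimator calls. The only substantive difference is how the multiplicative guarantee is converted to an additive one: the paper bounds each relevant estimate's error by $\epsilon\,\mathcal{D}(\mathcal{I}^*)$ directly, since every set appearing in its telescoping inequality has size at most $k$ and hence value at most $\mathrm{OPT}$, which yields $\epsilon=\epsilon'/(4k)$ and $\delta=\delta'/(4k^2)$ independently of the graph; your detour through the crude bound $\mathcal{D}\le M=|E(Ess(G^*))|$, repaired at the end by the observation $\mathrm{OPT}\ge 1$, gives the weaker choice $\epsilon=\epsilon'/(4Mk)$ --- still perfectly sufficient for the existence claim, but translating into a larger required sample size $N$ via Theorem \ref{thm:uconv}. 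Your union bound over all $O(k|V|)$ estimator calls (versus the paper's $4k^2$ events tied only to the comparisons used in the analysis) is likewise valid and, if anything, more carefully justified.
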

\vspace{-2mm}
See the supplementary materials for the proof.

\begin{theorem}
\label{thm:comp}
The computational complexity of General Greedy algorithm is $O(kNn^{\Delta+1})$ where $n$ and $\Delta$ are the order and maximum degree of $Ess(G^*)$, respectively.
\end{theorem}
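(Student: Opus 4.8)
The plan is to decompose the running time into four nested layers — the $k$ greedy iterations, the $O(n)$ candidate evaluations inside each iteration, the $N$ Monte-Carlo samples consumed by each evaluation of $\hat{\mathcal{D}}$, and the per-sample cost of drawing a DAG from the Markov equivalence class of $G^*$ and evaluating $D(\mathcal{I},\cdot)$ on it — and then multiply the four factors together.

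First I would read off the outer structure of Algorithm~\ref{algorithm:GG}: after the one-time CCI call, it performs $k$ iterations, and in iteration $i$ it must compute, for each $v\in V\setminus\mathcal{I}_{i-1}$ (at most $n$ vertices), the marginal gain $\hat{\mathcal{D}}(\mathcal{I}_{i-1}\cup\{v\})-\hat{\mathcal{D}}(\mathcal{I}_{i-1})$; the subtracted term does not depend on $v$ and is evaluated once per iteration. Hence the total running time is $O(kn)$ times the cost of a single call to Subroutine~1, up to lower-order bookkeeping, and the task reduces to bounding Subroutine~1 for a fixed target set $\mathcal{I}$.

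Next I would bound Subroutine~1. It draws $N$ DAGs $G'_1,\dots,G'_N$ from the MEC and returns $\frac{1}{N}\sum_{i=1}^N D(\mathcal{I},G'_i)$, so its cost is $N$ times the sum of: (a)~the cost of generating one sample via \textsc{RandEdge}, and (b)~the cost of evaluating $D(\mathcal{I},G')=|R(A^{(\mathcal{I})}_{G'},G')|$. For (b), one reads the at most $|\mathcal{I}|\Delta\le k\Delta$ orientations of edges of $Ess(G^*)$ incident to $\mathcal{I}$ from $G'$, unions them with $A(Ess(G^*))$, and closes under the Meek rules while counting the newly oriented edges of $H$; each Meek application orients at least one previously undirected edge, so there are at most $|E(Ess(G^*))|\le n\Delta/2$ of them, and checking rule applicability at a given edge inspects only the (size-$\le\Delta$) neighborhoods of a bounded number of vertices, so (b) runs in time polynomial in $n$ with a small constant exponent (and polynomial in $\Delta$). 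For (a), I would use that the undirected part $H$ of $Ess(G^*)$ splits into components that are oriented independently \cite{chickering2002optimal}, each of which is an undirected connected chordal graph; by the recursion of \cite{ghassami2018mec} implemented in the functions \textsc{Flowed}, \textsc{W} and \textsc{RandEdge}, producing one uniform sample on an $n$-vertex graph of maximum degree $\Delta$ costs $O(n^{\Delta})$ — this is the dominant term and the source of the $\Delta$ in the exponent. Combining, one call to Subroutine~1 costs $O(Nn^{\Delta})$.

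Multiplying the layers yields $O(kn)\cdot O(Nn^{\Delta})=O(kNn^{\Delta+1})$, the claimed bound. I expect the only genuinely non-routine ingredient to be justifying the $O(n^{\Delta})$ per-sample cost of the equivalence-class sampler — this is where chordality of the components of $H$ and the bounded-degree assumption are actually used, and it rests on the analysis of the weight recursion \textsc{W} from \cite{ghassami2018mec}; counting iterations and candidates and bounding the Meek-rule closure are routine. (As an aside, since the $N$ sampled DAGs are independent of $\mathcal{I}$, sharing them across candidates and iterations would shave a factor of $n$; but the stated bound is already obtained by the direct accounting above.)
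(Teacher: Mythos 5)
Your proposal is correct and takes essentially the same route as the paper: both decompose the cost into $k$ iterations, at most $n$ candidate evaluations per iteration, $N$ Monte-Carlo samples per call to Subroutine~1, and the $O(n^{\Delta})$ per-sample cost of \textsc{RandEdge} cited from \cite{ghassami2018mec}, yielding $O(kNn^{\Delta+1})$. Your extra accounting of the Meek-rule closure cost is detail the paper omits because it is dominated by the sampling term.
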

\vspace{-2mm}
See the supplementary materials for the proof.

\vspace{-2mm}
\subsection{A Fast $\mathcal{D}(\mathcal{I})$ Estimator}
\label{subsec:fast}

\begin{algorithm}[t]
\begin{algorithmic}
\STATE {\bf Input:} $Ess(G^*)$, intervention target set $\mathcal{I}$, and $N$.
\STATE {\bf initiate:}  $\mathcal{G'}=\emptyset$
\FOR{$i=1$ to $N$, generate $G'_i$ as follows:}
\STATE Uniformly shuffle the order of the elements of $V$.
\WHILE{the induced subgraph on any subset of size 3 of the variables is not directed, or a directed cycle, or a v-structure which was not in $Ess(G^*)$}
\FOR{all $\{v_i,v_j,v_k\}\subseteq V$}
\STATE Orient the undirected edges among $\{v_i,v_j,v_k\}$ independently according to $\textit{Bern}(\frac{1}{2})$ until it becomes a directed structure which is not a directed cycle or a v-structure which was not in $Ess(G^*)$.
\ENDFOR
\ENDWHILE
\STATE $\mathcal{G'}=\mathcal{G'}\uplus G'_i$
\ENDFOR
\STATE {\bf Output:} $\hat{\mathcal{D}}(\mathcal{I})=\frac{1}{|\mathcal{G}'|}\sum_{G'_i\in\mathcal{G}'}D(\mathcal{I},G'_i)$
 \caption*{{\bf Subroutine 2} Fast $\mathcal{D}(\mathcal{I})$ Estimator Subroutine}
\end{algorithmic}
\label{algorithm:FE}
\end{algorithm}
The computational complexity of Subroutine 1 is $O(Nn^{\Delta})$, which may be intractable when the upper bound on the degree of the input graph is large.
Therefore, we propose a fast and efficient estimator for $\mathcal{D}(\mathcal{I})$, better suited for graphs with large degree.
Although this estimator is not unbiased, our extensive experimental results confirm that the sampling distribution of the sampler used in this estimator is very close to uniform.

The pseudo-code of the proposed estimator is presented in Subroutine 2. In this subroutine for the given mixed graph $Ess(G^*)$, we generate $N$ DAGs  from the Markov equivalence class of $G^*$ as follows:
We consider all subsets of size 3 from $V$ in a uniformly random order (achieved by uniformly shuffling the labels of elements of $V$). For each subset $\{v_i, v_j,v_k\}$, we orient the undirected edges among $\{v_i, v_j, v_k\}$ independently according to Bernoulli$(\frac{1}{2})$ distribution. If the resulting orientation on the induced subgraph on $\{v_i, v_j, v_k\}$ is a directed cycle or a new v-structure, which was not in $Ess(G^*)$, we redo the orienting. We keep checking all the subsets of size 3 until the induced subgraph on all of them are directed and none of them is a new v-structure, which did not exist in $Ess(G^*)$, or a directed cycle. 
\begin{lemma}
\label{lem:inMEC}
Each generated directed graph $G'_i$ in Fast $\mathcal{D}(\mathcal{I})$ Estimator Subroutine belongs to the Markov equivalence class of $G^*$.
\end{lemma}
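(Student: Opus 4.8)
The plan is to check the three properties that, by the Verma--Pearl characterization, are equivalent to membership of $G'_i$ in the Markov equivalence class $\mathcal{G}$ of $G^*$: (i) $G'_i$ has the same skeleton as $G^*$; (ii) $G'_i$ is acyclic; and (iii) $G'_i$ has exactly the same v-structures as $G^*$. Property (i) is immediate, since the subroutine only orients edges of $Ess(G^*)$ and never adds or removes one, and the skeleton of $Ess(G^*)$ coincides with that of $G^*$. I would also record an elementary consequence of the stopping rule: when the while-loop halts, every induced subgraph on a size-$3$ subset is fully directed (hence, when $|V|\ge 3$, every edge of $G'_i$ is oriented), is not a directed $3$-cycle, and is not a v-structure that is absent from $Ess(G^*)$.

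For (iii): since all members of a Markov equivalence class agree on v-structures \cite{judea1991equivalence}, every v-structure of $G^*$ is made of edges that are directed in $Ess(G^*)$; the subroutine leaves those edges untouched and preserves the skeleton, so every v-structure of $G^*$ is still a v-structure of $G'_i$. Conversely, any v-structure of $G'_i$ lives on a size-$3$ subset, and by the stopping rule that subset is not a ``new'' v-structure, so it was already present (as directed edges) in $Ess(G^*)$, hence in $G^*$. Thus the two v-structure sets coincide.

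The substance of the lemma is (ii), because the subroutine forbids only directed triangles, not longer directed cycles; the argument must upgrade this local check to global acyclicity using the structure of essential graphs. I would take a shortest directed cycle $C$ in $G'_i$; by the stopping rule it cannot have length $3$, so its length is at least $4$. Recall that $Ess(G^*)$ is a chain graph with chordal chain components \cite{andersson1997characterization}, so (a) every edge that is directed in $Ess(G^*)$ joins two distinct chain components, and (b) the component graph is acyclic. If $C$ uses at least one edge that was directed in $Ess(G^*)$, then reading off the chain component of each vertex along $C$ produces a closed walk in the component graph using at least one arc, i.e.\ a cycle in an acyclic graph --- a contradiction. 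Otherwise all edges of $C$ were undirected in $Ess(G^*)$, so $C$ lies inside a single chain component, whose skeleton is chordal; the underlying cycle of length $\ge 4$ then has a chord, and checking both possible orientations of that chord yields in each case a strictly shorter directed cycle, contradicting the minimality of $C$. Hence $G'_i$ is acyclic, and (i)--(iii) give $G'_i\in\mathcal{G}$.

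The step I expect to be the main obstacle is precisely the acyclicity argument: the reduction from ``no directed $3$-subcycle'' to ``no directed cycle'' is not a property of the random orientation but of essential graphs, and the case of a cycle spanning several chain components has to be handled via the chain-graph property, while the single-component case needs chordality and the shortening/chord argument. The ``no new v-structure'' condition, by contrast, is needed only for (iii) and plays no role in acyclicity; and termination of the while-loop, which is presupposed by the statement, need not be addressed here.
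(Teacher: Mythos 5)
Your proof is correct and follows essentially the same strategy as the paper's: reduce Markov-equivalence membership to the Verma--Pearl conditions (same skeleton and same v-structures, the latter guaranteed directly by the stopping rule), and upgrade the local ``no directed $3$-cycle'' check to global acyclicity via chordality of the undirected chain components. Your acyclicity step is a minimal-counterexample rephrasing of the paper's inductive Lemma on chordal graphs, and it is in fact slightly more complete: the paper's proof only invokes chordality of the components of the undirected subgraph of $Ess(G^*)$ and leaves implicit why a directed cycle cannot use an edge that was already directed in $Ess(G^*)$ (i.e.\ cross between chain components), whereas you dispose of that case explicitly by projecting onto the acyclic component graph of the chain graph $Ess(G^*)$. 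Both arguments are sound; yours just makes explicit a case the paper glosses over.
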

\vspace{-3mm}
See the supplementary materials for the proof.

We add the generated DAG to a multiset $\mathcal{G}'$.
Finally, we calculate the estimated value $\hat{\mathcal{D}}(\mathcal{I})$ on $\mathcal{G}'$ instead of $\mathcal{G}$ as $\hat{\mathcal{D}}(\mathcal{I})=\frac{1}{|\mathcal{G}'|}\sum_{G'_i\in\mathcal{G}'}D(\mathcal{I},G'_i)$.

\vspace{-2mm}
\section{Improved Greedy Algorithm}
\label{sec:algorithm}


 We exploit the submodularity of function $\mathcal{D}$ to implement an accelerated variant of the general greedy algorithm through {\it lazy} evaluations, originally proposed by Minoux\footnote{There are improved versions of this algorithm in the literature \cite{mirzasoleiman2015lazier}.} \cite{minoux1978accelerated}. 
In each round of the general greedy algorithm, we check the marginal gain $\Delta_v(\mathcal{I})$ for all remaining vertices in $V\backslash \mathcal{I}$. Note that as a consequence of submodularity of function $\mathcal{D}$, the set function $\Delta_v$ is monotonically decreasing.
The main idea of the improved greedy algorithm is to take advantage of this property to avoid checking all the variables in each round of the algorithm. More specifically, suppose for vertices $v_1$ and $v_2$, in the $i$-th round of the algorithm we have obtained marginal gains $\Delta_{v_1}(\mathcal{I}_i)>\Delta_{v_2}(\mathcal{I}_i)$. If in the $(i+1)$-th round, we calculate $\Delta_{v_1}(\mathcal{I}_{i+1})$ and observe that $\Delta_{v_1}(\mathcal{I}_{i+1})>\Delta_{v_2}(\mathcal{I}_i)$, from monotonic decreasing property of function $\Delta_v$, we can conclude that $\Delta_{v_1}(\mathcal{I}_{i+1})>\Delta_{v_2}(\mathcal{I}_{i+1})$, and hence, there is no need to calculate $\Delta_{v_2}(\mathcal{I}_{i+1})$.

Improved Greedy Algorithm is presented in Algorithm \ref{algorithm:IG}. The idea can be formalized as follows: We define a profit parameter $p_v$ for each variable $v$ and initialize the value for all variables with $\infty$. Moreover, we define an update flag $\textit{upd}_v$ for all variables, which will be set to false at the beginning of every round of the algorithm and will be switched to true if we update $p_v$ with the value of the marginal gain of vertex $v$.
In each round, the algorithm picks vertex $v\in V\backslash \mathcal{I}$ with the largest profit, updates its profit with the value of the marginal gain of $v$, and sets $\textit{upd}_v$ to true. This process is repeated until the vertex with the largest profit is already updated, i.e., its update flag is true. Then we add this vertex to $\mathcal{I}$ and end the round.
For example, if in a round, the vertex $v$ has the highest profit and after updating the profit of this vertex, $p_v$ is still larger than all the other profits, we do not need to evaluate the marginal gain of any other vertex and we add $v$ to $\mathcal{I}$.

The correctness of the Improved Greedy Algorithm follows directly from submodularity of function $\mathcal{D}$. Theorem \ref{thm:app} holds for Algorithm \ref{algorithm:IG} as well, that is, for any $\epsilon'>0$, Improved Greedy Algorithm provides us with a $(1-\frac{1}{e}-\epsilon')$-approximation of the optimal value.
This algorithm can lead to orders of magnitude performance speedup, as shown in \cite{leskovec2007cost}.

\vspace{-2mm}
\section{Experimental Results}
\label{sec:experiments}
\subsection{Synthetic Graphs}
In this subsection, we evaluate the performance of Ran-GrID approach on synthetically generated chordal graphs\footnote{A chord of a cycle is an edge not in the cycle whose endpoints are in the cycle. A hole in a graph is a cycle of length at least 4 having no chord. A graph is chordal if it has no hole.}. Subroutine 1 and Algorithm \ref{algorithm:IG} are used in our experiments. We use randomly chosen perfect elimination ordering (PEO)\footnote{ A perfect elimination ordering $\{v_1 , v_2,...,  v_n \}$ on the vertices of an undirected chordal graph is such that for all $i$, the induced neighborhood of $v_i$ on the subgraph formed by $\{v_1,v_2,...,v_{i-1}\}$ is a clique.} of the vertices to generate our underlying chordal graphs \cite{hauser2014two,shanmugam2015learning}. For each graph, we pick a random ordering of the vertices. Starting from the vertex $v$ with the highest order, we connect all the vertices with lower order to $v$ with probability inversely proportional to the order of $v$. Then, we connect all the parents of $v$ with directed edges, where each directed edge is oriented from the parent with the lower order to the parent with the higher order. In order to make sure that the generated graph will be connected, if vertex $v$ is not connected to any of the vertices with the lower order, we pick one of them uniformly at random and set it as the parent of $v$.



\begin{algorithm}[t]
\begin{algorithmic}
 \STATE {\bf Input:} Joint distribution over $V$, and budget $k$.
\STATE Obtain $Ess(G^*)$ by performing a CCI algorithm.
\STATE {\bf initiate:} $\mathcal{I}_0=\emptyset$, and $p_v=\infty$, $\forall v\in V$.
\FOR{$i=1$ to $k$}
\STATE $upd_v=$\texttt{false}, $\forall v \in V\backslash \mathcal{I}_{i-1}$
\WHILE{\texttt{true}}
\STATE $v^* =\arg\max_{v\in V\backslash \mathcal{I}_{i-1}} p_v$
\IF{$upd_{v^*}$}
\STATE $\mathcal{I}_i=\mathcal{I}_{i-1}\cup \{v^*\}$
\STATE {\bf break};
\ELSE
\STATE $p_{v^*}= \hat{\mathcal{D}}(\mathcal{I}_{i-1}\cup \{v^*\})-\hat{\mathcal{D}}(\mathcal{I}_{i-1})$
\STATE $upd_{v^*}=\texttt{true}$
\ENDIF
\ENDWHILE
\ENDFOR
\STATE {\bf Output:} $\hat{\mathcal{I}}=\mathcal{I}_k$
 \caption{Improved Greedy Algorithm}
 \label{algorithm:IG}
\end{algorithmic}
\end{algorithm}

\begin{figure*}[t]
\centering
\begin{minipage}{.333\textwidth}
  \centering
  \includegraphics[width=.98\linewidth]{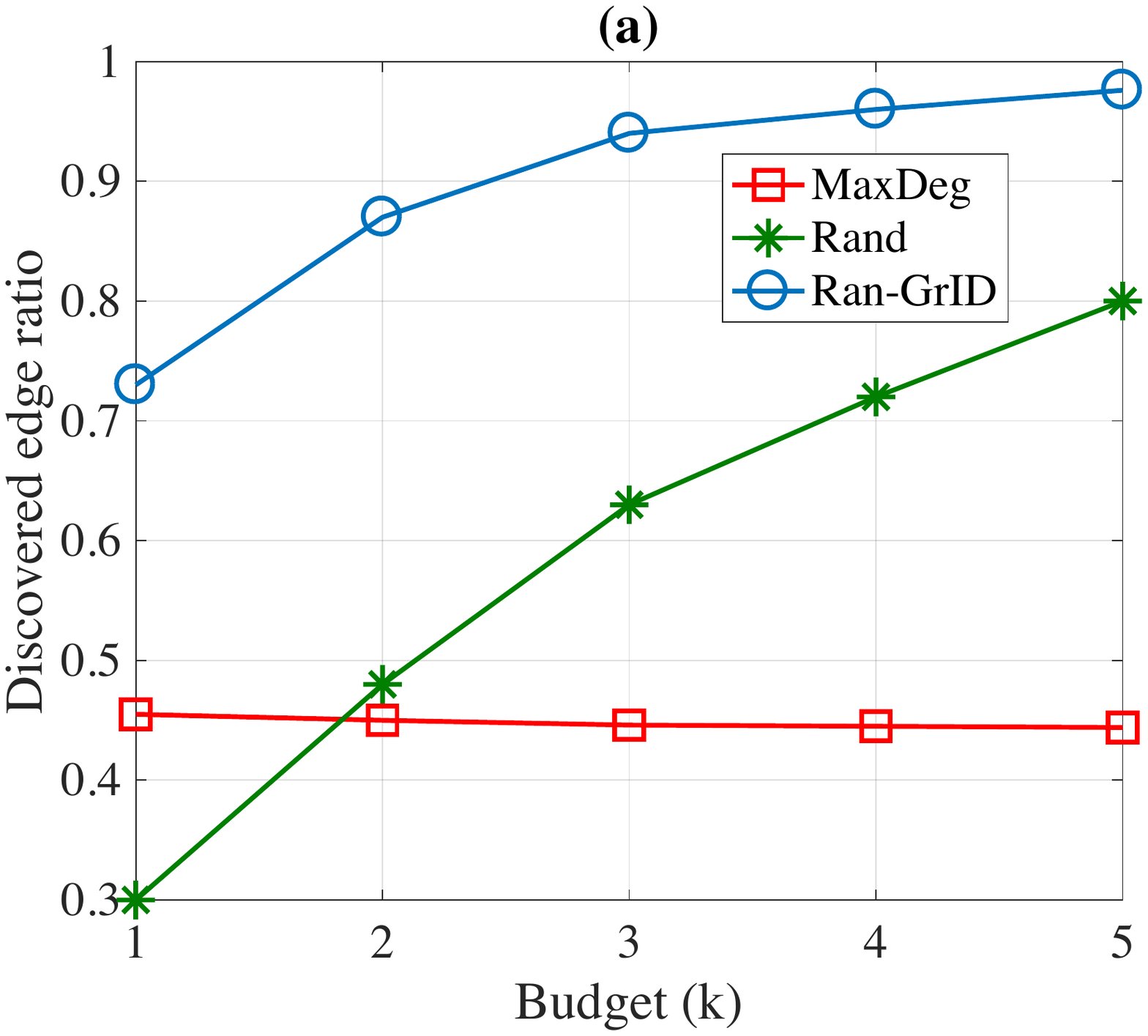}
\end{minipage}%
\begin{minipage}{.333\textwidth}
  \centering
  \includegraphics[width=.98\linewidth]{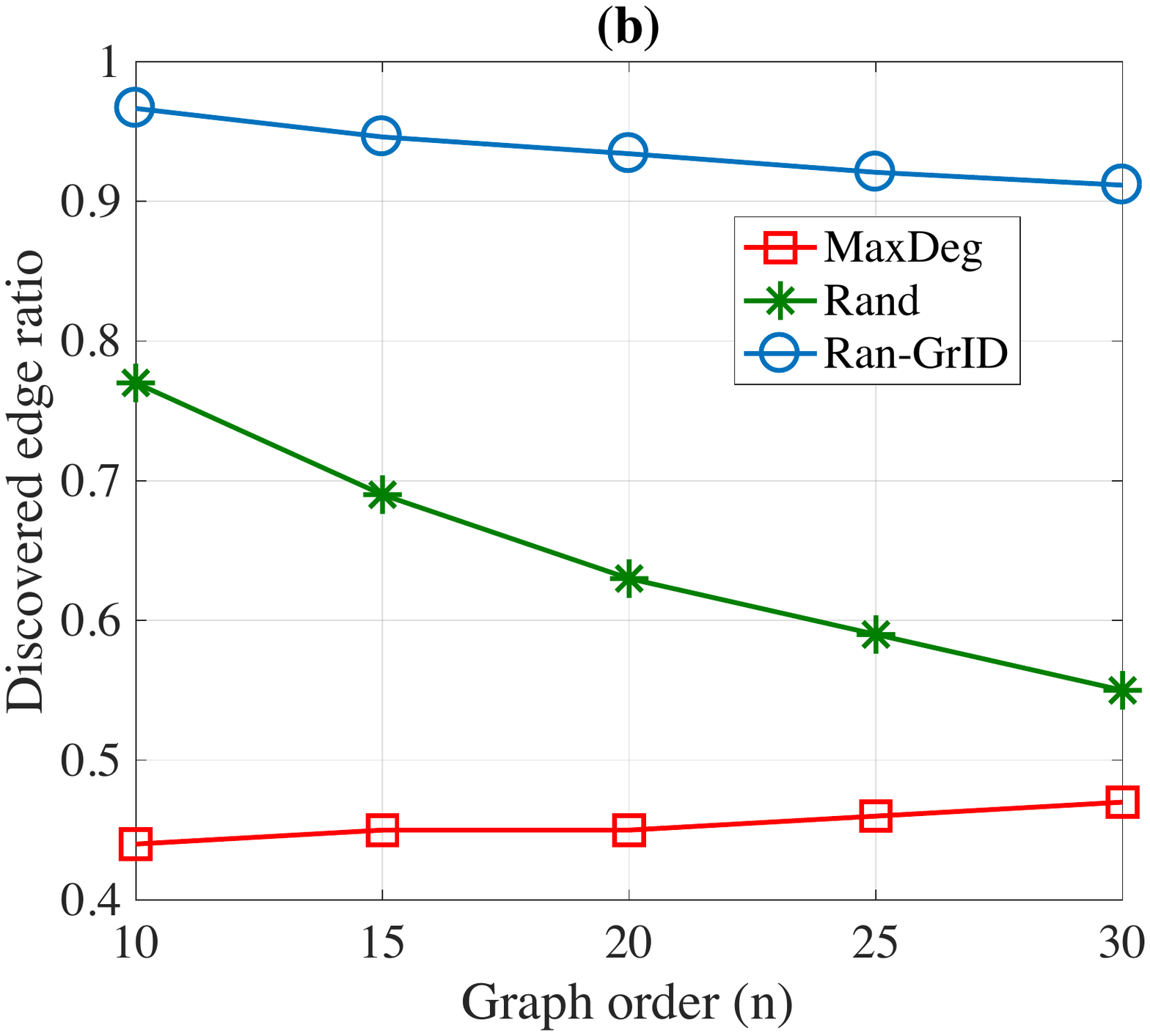}
\end{minipage}%
\begin{minipage}{.333\textwidth}
  \centering
  \includegraphics[width=.98\linewidth]{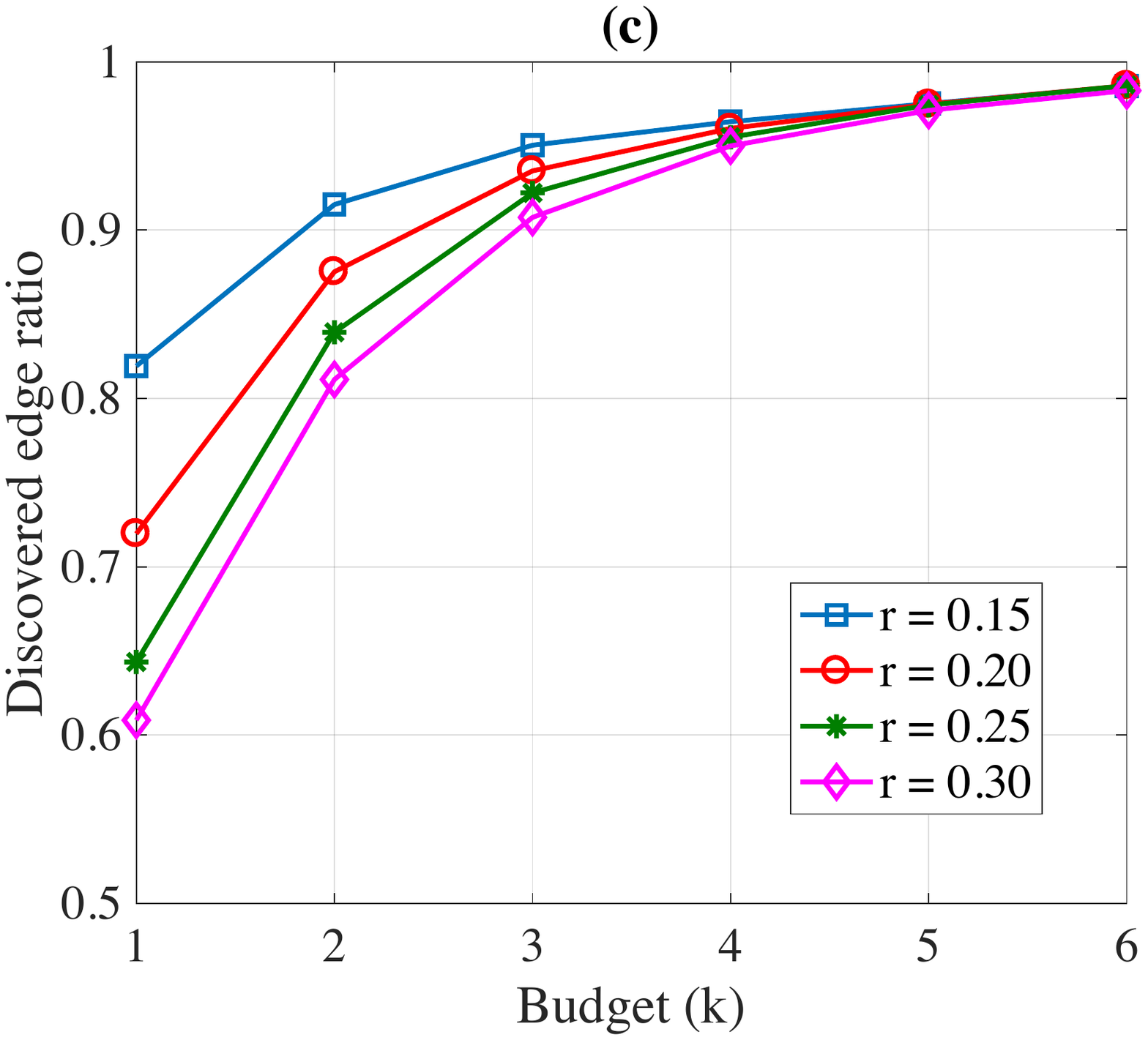}
\end{minipage}
\caption{Discovered edge ratio versus (a) budget for $n=20$, (b) graph orders for $k=3$, (c) budget for $n=20$ and different densities.} 
\vspace{-3mm}
\label{fig:exps}
\end{figure*}

%
%
%

To evaluate the performance of the proposed algorithm, for any underlying graph, we consider the ratio of the number of edges whose directions are discovered merely as a result of interventions to the number of edges whose directions were not resolved from the observational data. Note that due to our specific graph generating approach, the orientation of none of the edges is learned from the observational data. We compared Ran-GrID with two naive approaches: 1. Rand: Selecting intervened variables randomly, 2. MaxDeg: Sorting the list of variables based on the number of undirected edges connected to them in descending order and picking the first $k$ variables from the sorted list.

We generated 100 instances of chordal DAGs of order 20. 
Figure \ref{fig:exps}(a) depicts the discovered edge ratio with respect to the budget $k$. As seen in this figure, three interventions suffices to discover the direction of more than 90\% of the edges whose direction was unknown prior to performing interventions. 
Further, to investigate the effect of the order of the graph on the performance of the proposed algorithm and two naive approaches, we evaluated the discovered edge ratio for budget $k=3$ on graphs with order $n\in \{10,15,20,25,30\}$ in Figure \ref{fig:exps}(b). As it can be seen in the figure, the discovered edge ratio for the proposed approach is greater than $91\%$ for all $n\leq 30$. The performance of Rand approach degrades dramatically as $n$ increases. Moreover, MaxDeg approach has even lower performance than Rand approach. We also studied the effect of graph density on the performance of proposed algorithm. Let $r$ be the ratio of average number of edges to $\binom{n}{2}$.
The discovered edge ratio for chordal DAGs of order 20 versus budget for different densities is depicted in  Figure \ref{fig:exps}(c).

Furthermore, to compare the performance of the proposed algorithm with the optimal solution, we generated 100 instances of chordal DAGs of order 10 and performed a brute force search to find the optimal solution of \eqref{eq:optproblem} for budget $k=2$. The discovered edge ratio was $0.9$ and $0.916$ for our proposed algorithm and the optimal solution, respectively. For the aforementioned setting, the running time of the proposed approach on a machine with Intel Core i7 processor and 16 GB of RAM was $216$ seconds while the one of the brute force approach was greater than $6000$ seconds.

\vspace{-2mm}
\subsection{Real Graphs}
We evaluated the performance of the proposed Improved Greedy Algorithm in gene regulatory networks (GRN). GRN is a collection of biological regulators that interact with each other. In GRN, the transcription factors are the main players to activate genes. The interactions between transcription factors and regulated genes in a species genome can be presented by a directed graph. In this graph, links are drawn whenever a transcription factor regulates a gene's expression. Moreover, some of vertices have both functions, i.e., are both transcription factor and regulated gene. 

We considered GRNs in ``DREAM 3 In Silico Network" challenge, conducted in 2008 \cite{marbach2009generating}. The networks in this challenge were extracted from known biological interaction networks. Since we know the true causal structures in these GRNs, we can obtain $Ess(G^*)$ and give it as an input to the proposed algorithm. Figure \ref{fig:sim3} depicts the discovered edge ratio in five networks extracted from GRNs of E-coli and Yeast bacteria with budget $k=5$. The order of each network is 100. As it can be seen, the discovered edge ratio is at least $0.65$ in all GRNs.
\vspace{-2mm}
\section{Conclusion}
We studied the problem of experiment design for causal structure learning when only a limited number of experiments are available. In our model, each experiment consists of intervening on a single vertex. 
Also, experiments are designed merely based on the result of an initial purely observational test, which enables the experimenter to perform the interventional tests in parallel. 
We addressed the following question: ``How much of the causal structure can be learned when only a limited number of experiments are available?'' 
We formulated the problem of finding the best intervention target set as an optimization problem which aims to maximize the average number of edges whose directions are discovered. 
We introduce, for the first time, the use of submoular optimization in the context of causal experimental design by showing that
the objective function is monotonically increasing and submodular. Consequently, the greedy algorithm is a $(1-\frac{1}{e})$-approximation algorithm for this problem. Moreover, we proposed estimation methods in order to compute the objective function for a given set of intervention targets. 
We further presented an accelerated variant of the greedy algorithm, which can achieve orders of magnitude performance speedup. 
We evaluated our proposed improved greedy algorithm on synthetic as well as real graphs. The results showed that a significant portion of the causal systems can be learned by only a few number of interventions.

\begin{figure}[t]
\begin{center}
\centerline{\includegraphics[scale=0.4]{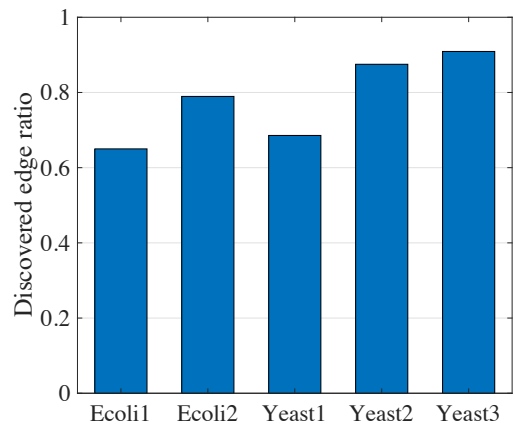}}
\caption{Discovered edge ratio in five GRNs from DREAM 3 challenge.}
\vspace{-8mm}
\label{fig:sim3}
\end{center}
\end{figure}

\newpage

\section*{Acknowledgements}
Ghassami, Salehkaleybar, and Kiyavash’s work was in part supported by Navy grant N00014-16-1-2804, and Army grant W911NF-15-1-0281. Bareinboim's work was in part supported by grants from NSF IIS-1704352 and IIS-1750807 (CAREER).


\nocite{langley00}

\bibliography{Bibliography-File}

\begin{thebibliography}{37}
\providecommand{\natexlab}[1]{#1}
\providecommand{\url}[1]{\texttt{#1}}
\expandafter\ifx\csname urlstyle\endcsname\relax
  \providecommand{\doi}[1]{doi: #1}\else
  \providecommand{\doi}{doi: \begingroup \urlstyle{rm}\Url}\fi

\bibitem[Andersson et~al.(1997)Andersson, Madigan, Perlman,
  et~al.]{andersson1997characterization}
Andersson, Steen~A, Madigan, David, Perlman, Michael~D, et~al.
\newblock A characterization of markov equivalence classes for acyclic
  digraphs.
\newblock \emph{The Annals of Statistics}, 25\penalty0 (2):\penalty0 505--541,
  1997.

\bibitem[Bareinboim \& Pearl(2014)Bareinboim and
  Pearl]{bareinboim2014transportability}
Bareinboim, Elias and Pearl, Judea.
\newblock Transportability from multiple environments with limited experiments:
  Completeness results.
\newblock In \emph{Advances in Neural Information Processing Systems}, pp.\
  280--288, 2014.

\bibitem[Chickering(2002)]{chickering2002optimal}
Chickering, David~Maxwell.
\newblock Optimal structure identification with greedy search.
\newblock \emph{Journal of machine learning research}, 3\penalty0
  (Nov):\penalty0 507--554, 2002.

\bibitem[Cooper \& Yoo(1999)Cooper and Yoo]{cooper1999causal}
Cooper, Gregory~F and Yoo, Changwon.
\newblock Causal discovery from a mixture of experimental and observational
  data.
\newblock In \emph{Proceedings of the Fifteenth conference on Uncertainty in
  artificial intelligence}, pp.\  116--125. Morgan Kaufmann Publishers Inc.,
  1999.

\bibitem[Eberhardt(2007)]{eberhardt2007causation}
Eberhardt, Frederick.
\newblock Causation and intervention.
\newblock \emph{Unpublished doctoral dissertation, Carnegie Mellon University},
  2007.

\bibitem[Eberhardt et~al.(2005)Eberhardt, Glymour, and
  Scheines]{eberhardt2005number}
Eberhardt, Frederick, Glymour, Clark, and Scheines, Richard.
\newblock On the number of experiments sufficient and in the worst case
  necessary to identify all causal relations among n variables.
\newblock pp.\  178--184, 2005.

\bibitem[Eberhardt et~al.(2006)Eberhardt, Glymour, and
  Scheines]{eberhardt2006n}
Eberhardt, Frederick, Glymour, Clark, and Scheines, Richard.
\newblock N-1 experiments suffice to determine the causal relations among n
  variables.
\newblock In \emph{Innovations in machine learning}, pp.\  97--112. Springer,
  2006.

\bibitem[Ghassami et~al.(2018)Ghassami, Salehkaleybar, and
  Kiyavash]{ghassami2018mec}
Ghassami, AmirEmad, Salehkaleybar, Saber, and Kiyavash, Negar.
\newblock Counting and uniform sampling from markov equivalent dags.
\newblock \emph{arXiv preprint arXiv:1802.01239}, 2018.

\bibitem[Hauser \& B{\"u}hlmann(2012)Hauser and
  B{\"u}hlmann]{hauser2012characterization}
Hauser, Alain and B{\"u}hlmann, Peter.
\newblock Characterization and greedy learning of interventional markov
  equivalence classes of directed acyclic graphs.
\newblock \emph{Journal of Machine Learning Research}, 13\penalty0
  (Aug):\penalty0 2409--2464, 2012.

\bibitem[Hauser \& B{\"u}hlmann(2014)Hauser and B{\"u}hlmann]{hauser2014two}
Hauser, Alain and B{\"u}hlmann, Peter.
\newblock Two optimal strategies for active learning of causal models from
  interventional data.
\newblock \emph{International Journal of Approximate Reasoning}, 55\penalty0
  (4):\penalty0 926--939, 2014.

\bibitem[He \& Geng(2008)He and Geng]{he2008active}
He, Yang-Bo and Geng, Zhi.
\newblock Active learning of causal networks with intervention experiments and
  optimal designs.
\newblock \emph{Journal of Machine Learning Research}, 9\penalty0
  (Nov):\penalty0 2523--2547, 2008.

\bibitem[He et~al.(2015)He, Jia, and Yu]{he2015counting}
He, Yangbo, Jia, Jinzhu, and Yu, Bin.
\newblock Counting and exploring sizes of markov equivalence classes of
  directed acyclic graphs.
\newblock \emph{The Journal of Machine Learning Research}, 16\penalty0
  (1):\penalty0 2589--2609, 2015.

\bibitem[Hoyer et~al.(2009)Hoyer, Janzing, Mooij, Peters, and
  Sch{\"o}lkopf]{hoyer2009nonlinear}
Hoyer, Patrik~O, Janzing, Dominik, Mooij, Joris~M, Peters, Jonas, and
  Sch{\"o}lkopf, Bernhard.
\newblock Nonlinear causal discovery with additive noise models.
\newblock In \emph{Advances in neural information processing systems}, pp.\
  689--696, 2009.

\bibitem[Hyttinen et~al.(2013)Hyttinen, Eberhardt, and
  Hoyer]{hyttinen2013experiment}
Hyttinen, Antti, Eberhardt, Frederick, and Hoyer, Patrik~O.
\newblock Experiment selection for causal discovery.
\newblock \emph{Journal of Machine Learning Research}, 14\penalty0
  (1):\penalty0 3041--3071, 2013.

\bibitem[Kocaoglu et~al.(2017{\natexlab{a}})Kocaoglu, Dimakis, and
  Vishwanath]{kocaoglu2017cost}
Kocaoglu, Murat, Dimakis, Alexandros~G, and Vishwanath, Sriram.
\newblock Cost-optimal learning of causal graphs.
\newblock \emph{arXiv preprint arXiv:1703.02645}, 2017{\natexlab{a}}.

\bibitem[Kocaoglu et~al.(2017{\natexlab{b}})Kocaoglu, Shanmugam, and
  Bareinboim]{kocaoglu2017experimental}
Kocaoglu, Murat, Shanmugam, Karthikeyan, and Bareinboim, Elias.
\newblock Experimental design for learning causal graphs with latent variables.
\newblock In \emph{Advances in Neural Information Processing Systems}, pp.\
  7021--7031, 2017{\natexlab{b}}.

\bibitem[Koller \& Friedman(2009)Koller and Friedman]{koller2009probabilistic}
Koller, Daphne and Friedman, Nir.
\newblock \emph{Probabilistic graphical models: principles and techniques}.
\newblock MIT press, 2009.

\bibitem[Langley(2000)]{langley00}
Langley, P.
\newblock Crafting papers on machine learning.
\newblock In Langley, Pat (ed.), \emph{Proceedings of the 17th International
  Conference on Machine Learning (ICML 2000)}, pp.\  1207--1216, Stanford, CA,
  2000. Morgan Kaufmann.

\bibitem[Leskovec et~al.(2007)Leskovec, Krause, Guestrin, Faloutsos,
  VanBriesen, and Glance]{leskovec2007cost}
Leskovec, Jure, Krause, Andreas, Guestrin, Carlos, Faloutsos, Christos,
  VanBriesen, Jeanne, and Glance, Natalie.
\newblock Cost-effective outbreak detection in networks.
\newblock In \emph{Proceedings of the 13th ACM SIGKDD international conference
  on Knowledge discovery and data mining}, pp.\  420--429. ACM, 2007.

\bibitem[Marbach et~al.(2009)Marbach, Schaffter, Mattiussi, and
  Floreano]{marbach2009generating}
Marbach, Daniel, Schaffter, Thomas, Mattiussi, Claudio, and Floreano, Dario.
\newblock Generating realistic in silico gene networks for performance
  assessment of reverse engineering methods.
\newblock \emph{Journal of computational biology}, 16\penalty0 (2):\penalty0
  229--239, 2009.

\bibitem[Meek(1997)]{meek1997graphical}
Meek, Christopher.
\newblock Graphical models: Selecting causal and statistical models.
\newblock 1997.

\bibitem[Minoux(1978)]{minoux1978accelerated}
Minoux, Michel.
\newblock Accelerated greedy algorithms for maximizing submodular set
  functions.
\newblock \emph{Optimization Techniques}, pp.\  234--243, 1978.

\bibitem[Mooij et~al.(2016)Mooij, Peters, Janzing, Zscheischler, and
  Sch{\"o}lkopf]{mooij2016distinguishing}
Mooij, Joris~M, Peters, Jonas, Janzing, Dominik, Zscheischler, Jakob, and
  Sch{\"o}lkopf, Bernhard.
\newblock Distinguishing cause from effect using observational data: methods
  and benchmarks.
\newblock \emph{The Journal of Machine Learning Research}, 17\penalty0
  (1):\penalty0 1103--1204, 2016.

\bibitem[Nemhauser et~al.(1978)Nemhauser, Wolsey, and
  Fisher]{nemhauser1978analysis}
Nemhauser, George~L, Wolsey, Laurence~A, and Fisher, Marshall~L.
\newblock An analysis of approximations for maximizing submodular set
  functions?i.
\newblock \emph{Mathematical Programming}, 14\penalty0 (1):\penalty0 265--294,
  1978.

\bibitem[Pearl(2009)]{pearl2009causality}
Pearl, Judea.
\newblock \emph{Causality}.
\newblock Cambridge university press, 2009.

\bibitem[Peters \& B{\"u}hlmann(2012)Peters and
  B{\"u}hlmann]{peters2012identifiability}
Peters, Jonas and B{\"u}hlmann, Peter.
\newblock Identifiability of gaussian structural equation models with equal
  error variances.
\newblock \emph{arXiv preprint arXiv:1205.2536}, 2012.

\bibitem[Quinn et~al.(2015)Quinn, Kiyavash, and Coleman]{quinn2015directed}
Quinn, Christopher~J, Kiyavash, Negar, and Coleman, Todd~P.
\newblock Directed information graphs.
\newblock \emph{IEEE Transactions on information theory}, 61\penalty0
  (12):\penalty0 6887--6909, 2015.

\bibitem[Sch{\"o}lkopf et~al.(2012)Sch{\"o}lkopf, Janzing, Peters, Sgouritsa,
  Zhang, and Mooij]{scholkopf2012causal}
Sch{\"o}lkopf, Bernhard, Janzing, Dominik, Peters, Jonas, Sgouritsa, Eleni,
  Zhang, Kun, and Mooij, Joris.
\newblock On causal and anticausal learning.
\newblock \emph{arXiv preprint arXiv:1206.6471}, 2012.

\bibitem[Shanmugam et~al.(2015)Shanmugam, Kocaoglu, Dimakis, and
  Vishwanath]{shanmugam2015learning}
Shanmugam, Karthikeyan, Kocaoglu, Murat, Dimakis, Alexandros~G, and Vishwanath,
  Sriram.
\newblock Learning causal graphs with small interventions.
\newblock In \emph{Advances in Neural Information Processing Systems}, pp.\
  3195--3203, 2015.

\bibitem[Shimizu et~al.(2006)Shimizu, Hoyer, Hyv{\"a}rinen, and
  Kerminen]{shimizu2006linear}
Shimizu, Shohei, Hoyer, Patrik~O, Hyv{\"a}rinen, Aapo, and Kerminen, Antti.
\newblock A linear non-gaussian acyclic model for causal discovery.
\newblock \emph{Journal of Machine Learning Research}, 7\penalty0
  (Oct):\penalty0 2003--2030, 2006.

\bibitem[Shpitser \& Pearl(2006)Shpitser and Pearl]{shpitser2006identification}
Shpitser, Ilya and Pearl, Judea.
\newblock Identification of joint interventional distributions in recursive
  semi-markovian causal models.
\newblock In \emph{Proceedings of the National Conference on Artificial
  Intelligence}, volume~21, pp.\  1219, 2006.

\bibitem[Solus et~al.(2017)Solus, Wang, Matejovicova, and
  Uhler]{solus2017consistency}
Solus, Liam, Wang, Yuhao, Matejovicova, Lenka, and Uhler, Caroline.
\newblock Consistency guarantees for permutation-based causal inference
  algorithms.
\newblock \emph{arXiv preprint arXiv:1702.03530}, 2017.

\bibitem[Spirtes et~al.(2000)Spirtes, Glymour, and
  Scheines]{spirtes2000causation}
Spirtes, Peter, Glymour, Clark~N, and Scheines, Richard.
\newblock \emph{Causation, prediction, and search}.
\newblock MIT press, 2000.

\bibitem[Sun et~al.(2015)Sun, Taylor, and Bollt]{sun2015causal}
Sun, Jie, Taylor, Dane, and Bollt, Erik~M.
\newblock Causal network inference by optimal causation entropy.
\newblock \emph{SIAM Journal on Applied Dynamical Systems}, 14\penalty0
  (1):\penalty0 73--106, 2015.

\bibitem[Tian et~al.(2012)Tian, He, and Ram]{tian2012bayesian}
Tian, Jin, He, Ru, and Ram, Lavanya.
\newblock Bayesian model averaging using the k-best bayesian network
  structures.
\newblock \emph{arXiv preprint arXiv:1203.3520}, 2012.

\bibitem[Verma \& Pearl(1991)Verma and Pearl]{judea1991equivalence}
Verma and Pearl, Judea.
\newblock Equivalence and synthesis of causal models.
\newblock In \emph{Proceedings of UAI}, pp.\  220--227, 1991.

\bibitem[Verma \& Pearl(1992)Verma and Pearl]{verma1992algorithm}
Verma, Thomas and Pearl, Judea.
\newblock An algorithm for deciding if a set of observed independencies has a
  causal explanation.
\newblock In \emph{Proceedings of UAI}, pp.\  323--330, 1992.

\end{thebibliography}
\bibliographystyle{icml2018}

\newpage~
\newpage

\begin{appendices}

\section{Example of Comparison with the Influence Maximization Problem}

Suppose $k=1$. Figure \ref{fig:ex} depicts a graph for which the optimal solution to the influence maximization problem is different from the optimal solution to the budgeted experiment design problems. Clearly, influencing vertex $v_1$ leads to influencing all the vertices in the graph, and hence, this vertex is the solution to the influence maximization problem. But, intervening on $v_1$ leads to discovering the orientation of only 3 edges, while intervening on, say $v_2$, leads to discovering the orientation of 5 edges.

\section{Proof of Lemma \ref{lem:mono}}

First we show that for a given directed graph $G_i\in\mathcal{G}$ the function $D(\mathcal{I},G_i)$ is a monotonically increasing function of $\mathcal{I}$. 
In the proposed method, intervening on elements of $\mathcal{I}$, we first discover the orientation of the edges in $A^{(\mathcal{I})}_{G_i}$, and then applying the Meek rules, we possibly learn the orientation of some extra edges. 
Having $\mathcal{I}_1\subseteq \mathcal{I}_2$ implies that $A^{(\mathcal{I}_1)}_{G_i}\subseteq A^{(\mathcal{I}_2)}_{G_i}$. Therefore using $\mathcal{I}_2$, we have more information about the direction of edges. Hence, in the step of applying Meek rules, by soundness and order-independence of Meek algorithm, we recover the direction of more extra edges, i.e., $R(A^{(\mathcal{I}_1)}_{G_i},G_i)\subseteq R(A^{(\mathcal{I}_2)}_{G_i},G_i)$, which in turn implies that $D(\mathcal{I}_1,G_i)\le D(\mathcal{I}_2,G_i)$.
Finally, from the relation $\mathcal{D}(\mathcal{I})=\frac{1}{|\mathcal{G}|}\sum_{G_i\in\mathcal{G}}D(\mathcal{I},G_i)$, the desired result is immediate.

\section{Proof of Lemma \ref{lem:nofusion}}
The direction $R(A_{G^*}^{(\mathcal{I}_1)},G^*)\cup R(A_{G^*}^{(\mathcal{I}_2)},G^*)\subseteq R(A_{G^*}^{(\mathcal{I}_1\cup\mathcal{I}_2)},G^*)$ is proved in the proof of Lemma \ref{lem:mono}. 
Also, as observed in the proof of Lemma \ref{lem:mono}, we have $R(A_{G^*}^{(\mathcal{I}_1\cup\mathcal{I}_2)},G^*)\subseteq R(R(A_{G^*}^{(\mathcal{I}_1)},G^*)\cup R(A_{G^*}^{(\mathcal{I}_2)},G^*),G^*)$. Therefore, in order to prove that $R(A_{G^*}^{(\mathcal{I}_1\cup\mathcal{I}_2)},G^*)\subseteq R(A_{G^*}^{(\mathcal{I}_1)},G^*)\cup R(A_{G^*}^{(\mathcal{I}_2)},G^*)$, it suffices to show that $R(R(A_{G^*}^{(\mathcal{I}_1)},G^*)\cup R(A_{G^*}^{(\mathcal{I}_2)},G^*),G^*)\subseteq R(A_{G^*}^{(\mathcal{I}_1)},G^*)\cup R(A_{G^*}^{(\mathcal{I}_2)},G^*)$, for which it suffices to show that if $e\not\in R(A_{G^*}^{(\mathcal{I}_1)},G^*)$ and $e\not\in R(A_{G^*}^{(\mathcal{I}_2)},G^*)$, then $e\not\in R(R(A_{G^*}^{(\mathcal{I}_1)},G^*)\cup R(A_{G^*}^{(\mathcal{I}_2)},G^*),G^*)$.\\

\textit{Proof by contradiction.} Let $e\not\in R(A_{G^*}^{(\mathcal{I}_1)},G^*)$ and $e\not\in R(A_{G^*}^{(\mathcal{I}_2)},G^*)$, but its orientation is learned in the first iteration of applying Meek rules to $R(A_{G^*}^{(\mathcal{I}_1)},G^*)\cup R(A_{G^*}^{(\mathcal{I}_2)},G^*)\cup A(\textit{Ess}(G^*))$. Then, we have learned the orientation of $e$ due to one of Meek rules \cite{verma1992algorithm}:
\begin{itemize}
\item \textit{Rule 1.} $e=\{a,b\}$ is oriented as $(a,b)$ if $\exists c$ s.t. $e_1=(c,a)\in R(A_{G^*}^{(\mathcal{I}_1)},G^*)\cup R(A_{G^*}^{(\mathcal{I}_2)},G^*)\cup A(\textit{Ess}(G^*))$, and $\{c,b\}\not\in$ skeleton of $G^*$.
\item \textit{Rule 2.} $e=\{a,b\}$ is oriented as $(a,b)$ if $\exists c$ s.t. $e_1=(a,c)\in R(A_{G^*}^{(\mathcal{I}_1)},G^*)\cup R(A_{G^*}^{(\mathcal{I}_2)},G^*)\cup A(\textit{Ess}(G^*))$, and $e_2=(c,b)\in R(A_{G^*}^{(\mathcal{I}_1)},G^*)\cup R(A_{G^*}^{(\mathcal{I}_2)},G^*)\cup A(\textit{Ess}(G^*))$.
\item \textit{Rule 3.} $e=\{a,b\}$ is oriented as $(a,b)$ if $\exists c,d$ s.t. $e_1=(c,b)\in R(A_{G^*}^{(\mathcal{I}_1)},G^*)\cup R(A_{G^*}^{(\mathcal{I}_2)},G^*)\cup A(\textit{Ess}(G^*))$, $e_2=(d,b)\in R(A_{G^*}^{(\mathcal{I}_1)},G^*)\cup R(A_{G^*}^{(\mathcal{I}_2)},G^*)\cup A(\textit{Ess}(G^*))$, $\{a,c\}\in$ skeleton of $G^*$, $\{a,d\}\in$ skeleton of $G^*$, and $\{c,d\}\not\in$ skeleton of $G^*$.
\item \textit{Rule 4.} $e=\{a,b\}$ is oriented as $(a,b)$ and $e=\{b,c\}$ is oriented as $(c,b)$ if $\exists d$ s.t. $e_1=(d,c)\in R(A_{G^*}^{(\mathcal{I}_1)},G^*)\cup R(A_{G^*}^{(\mathcal{I}_2)},G^*)\cup A(\textit{Ess}(G^*))$, $\{a,c\}\in$ skeleton of $G^*$, $\{a,d\}\in$ skeleton of $G^*$, and $\{b,d\}\not\in$ skeleton of $G^*$.
\end{itemize}




\begin{figure}[t]
\begin{center}
\centerline{\includegraphics[scale=0.4]{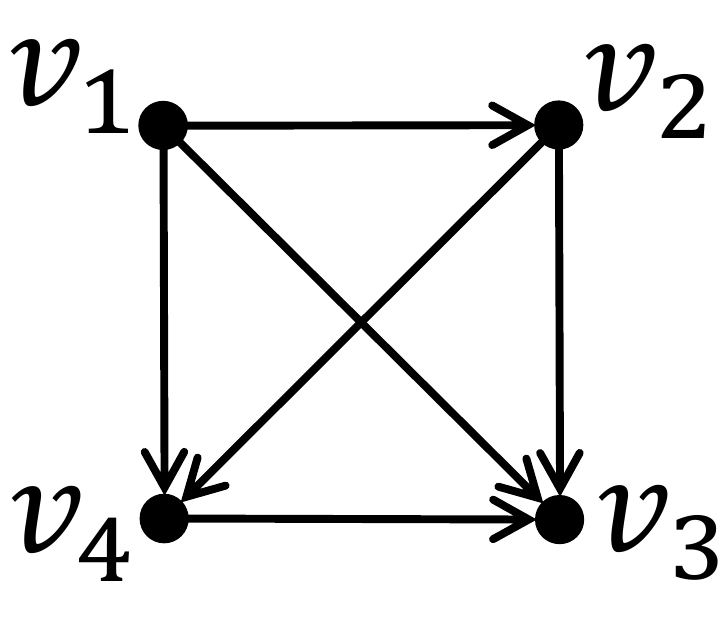}}
\caption{Example of comparison with the influence maximization problem.}
\label{fig:ex}
\end{center}
\end{figure}

In what follows, we show that the orientation of $e$ cannot be learned due to any of the Meek rules unless it belongs to $R(A_{G^*}^{(\mathcal{I}_1)},G^*)$ or $R(A_{G^*}^{(\mathcal{I}_2)},G^*)$.\\

\noindent
\textit{Rule 1.}

Without loss of generality, assume $e_1\in R(A_{G^*}^{(\mathcal{I}_1)},G^*)\cup A(\textit{Ess}(G^*))$. Therefore, we should have the condition of rule 1 satisfied when only intervening on $\mathcal{I}_1$ as well, which implies that $e\in R(A_{G^*}^{(\mathcal{I}_1)},G^*)$, which is a contradiction.\\


\noindent
\textit{Rule 2.}

If both $e_1$ and $e_2$ belong to $R(A_{G^*}^{(\mathcal{I}_1)},G^*)\cup A(\textit{Ess}(G^*))$ (or $R(A_{G^*}^{(\mathcal{I}_2)},G^*)\cup A(\textit{Ess}(G^*))$), then we should have the condition of rule 2 satisfied when only intervening on $\mathcal{I}_1$ (or $\mathcal{I}_2$) as well, which implies that $e\in R(A_{G^*}^{(\mathcal{I}_1)},G^*)$ (or $e\in R(A_{G^*}^{(\mathcal{I}_2)},G^*)$), which is a contradiction. Therefore, it suffices to show that the case that $e_1$ belongs to exactly one of $R(A_{G^*}^{(\mathcal{I}_1)},G^*)\cup A(\textit{Ess}(G^*))$ or $R(A_{G^*}^{(\mathcal{I}_2)},G^*)\cup A(\textit{Ess}(G^*))$ and $e_2$ belongs only to the other one, does not happen. To this end, it suffices to show that there does not exist intervention target set $\mathcal{I}$ such that $e_1\in R(A_{G^*}^{(\mathcal{I})},G^*)\cup A(\textit{Ess}(G^*))$, and $e,e_2\not\in R(A_{G^*}^{(\mathcal{I})},G^*)\cup A(\textit{Ess}(G^*))$, i.e., there does not exist intervention target set $\mathcal{I}$ that has structure $S_0$, depicted in Figure \ref{fig:s0}, as a subgraph of $\textit{Ess}(G^*)$ after applying the orientations learned from $R(A_{G^*}^{(\mathcal{I})},G^*)$.

\begin{figure}[h]
\begin{center}
\centerline{\includegraphics[scale=0.2]{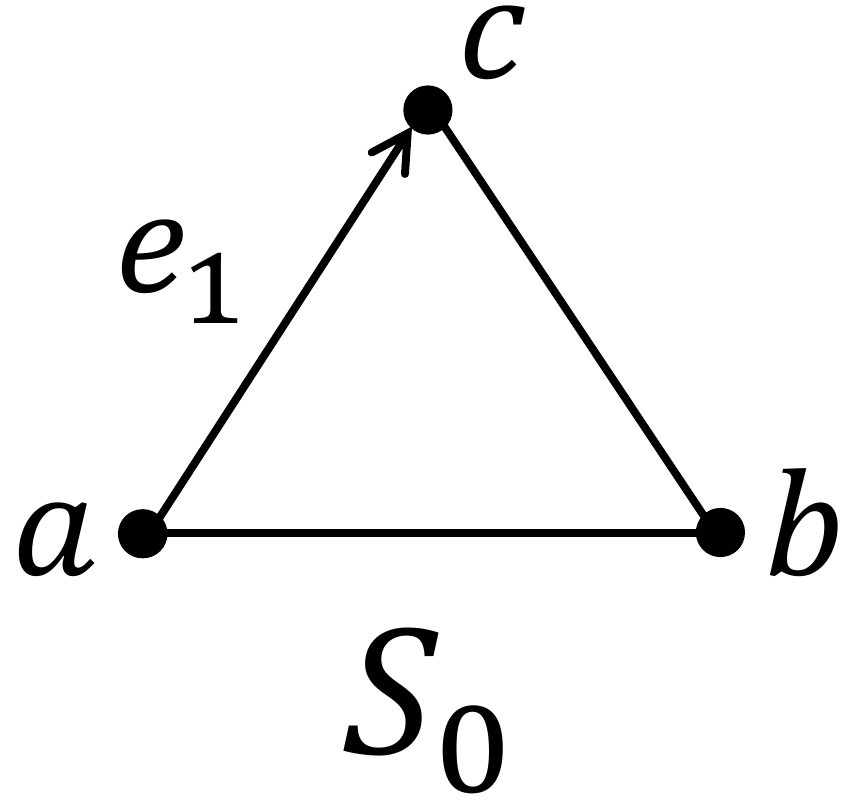}}
\caption{Structure $S_0$}
\vspace{-7mm}
\label{fig:s0}
\end{center}
\end{figure}

If $e_1\in A_{G^*}^{(\mathcal{I})}$, then $a\in \mathcal{I}$ or $c\in \mathcal{I}$, which implies $e\in A_{G^*}^{(\mathcal{I})}$ or $e_2\in A_{G^*}^{(\mathcal{I})}$, respectively, and hence, $e\in R(A_{G^*}^{(\mathcal{I})},G^*)$ or $e_2\in R(A_{G^*}^{(\mathcal{I})},G^*)$, respectively. Therefore, in either case, $e\in R(A_{G^*}^{(\mathcal{I})},G^*)$, and $S_0$ will not be a subgraph.
Therefore, $e_1\not\in A_{G^*}^{(\mathcal{I})}$, and hence, $e_1$ was learned by applying one of the Meek rules. We consider each or the rules in the following:
\begin{itemize}
\item If we have learned the orientation of $e_1$ from rule 1, then we should have had one of the structures in Figure \ref{fig:rule1} as a subgraph of $\textit{Ess}(G^*)$ after applying the orientations learned from $R(A_{G^*}^{(\mathcal{I})},G^*)$. In case of structure $S_1$, using rule 1 on subgraph induced on vertices $\{v_1,a,b\}$, we will also learn $(a,b)$. In case of structure $S_2$, using rule 4, we will also learn $(b,c)$. Therefore, we cannot learn only the direction of $e_1$ and hence, $S_0$ will not be a subgraph.
\begin{figure}[h]
\begin{center}
\centerline{\includegraphics[scale=0.2]{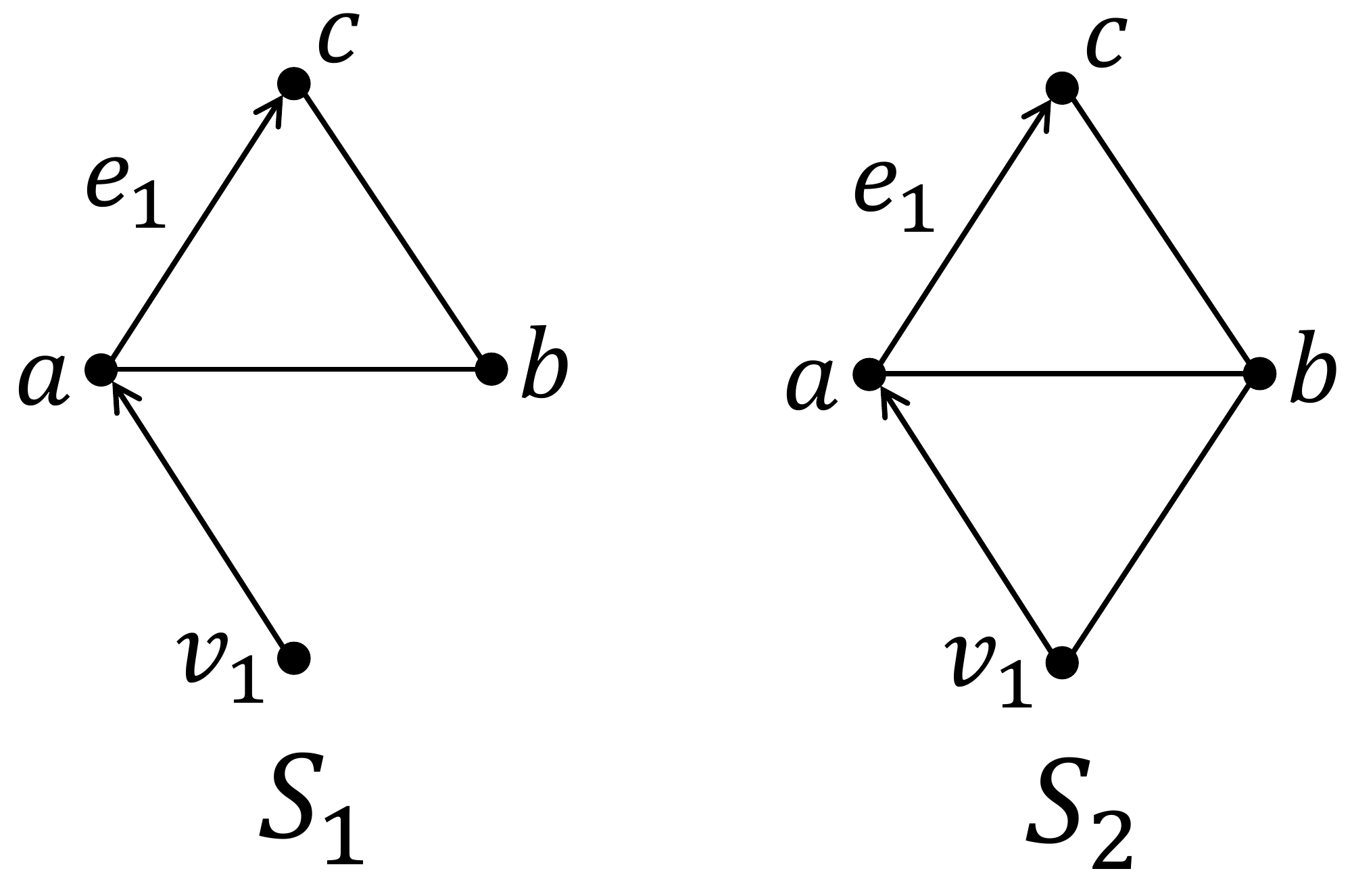}}
\caption{Rule 1}
\vspace{-7mm}
\label{fig:rule1}
\end{center}
\end{figure}

\item If we have learned the orientation of $e_1$ from rule 3, then we have had one of the structures in Figure \ref{fig:rule3} as a subgraph of $\textit{Ess}(G^*)$ after applying the orientations learned from $R(A_{G^*}^{(\mathcal{I})},G^*)$. In case of structures $S_3$ and $S_4$, using rule 1 on subgraph induced on vertices $\{v_2,c,b\}$, we will also learn $(c,b)$. In case of structure $S_5$, using rule 3 on subgraph induced on vertices $\{b,v_2,c,v_1\}$, we will also learn $(b,c)$. Therefore, we cannot learn only the direction of $e_1$ and hence, $S_0$ will not be a subgraph.
\begin{figure}[h]
\begin{center}
\centerline{\includegraphics[scale=0.2]{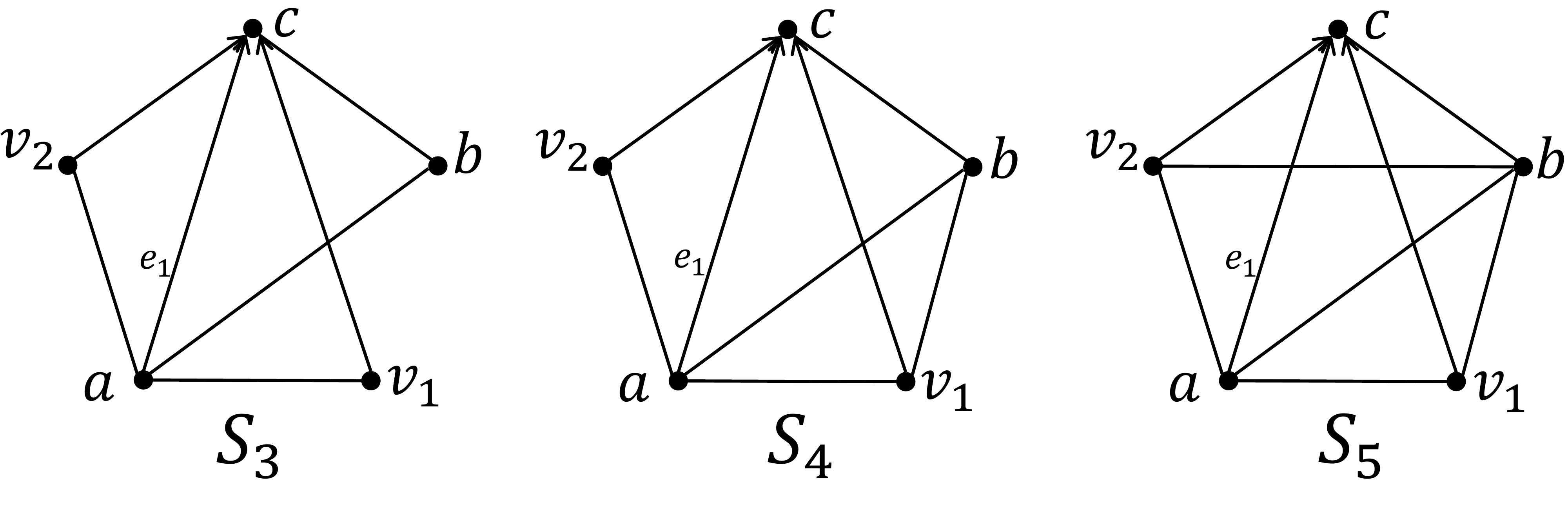}}
\caption{Rule 3}
\vspace{-7mm}
\label{fig:rule3}
\end{center}
\end{figure}

\item If we have learned the orientation of $e_1$ from rule 4, then we have had one of the structures in Figure \ref{fig:rule4} as a subgraph of $\textit{Ess}(G^*)$ after applying the orientations learned from $R(A_{G^*}^{(\mathcal{I})},G^*)$. In case of structures $S_6$, using rule 1 on subgraph induced on vertices $\{v_1,c,b\}$, we will also learn $(c,b)$. In case of structure $S_7$, using rule 1 on subgraph induced on vertices $\{v_2,v_1,b\}$, we will also learn $(v_1,b)$, and then using rule 4 on subgraph induced on vertices $\{b,a,v_2,v_1\}$, we will also learn $(a,b)$. In case of structures $S_{8}$, using rule 4 on subgraph induced on vertices $\{b,v_2,v_1,c\}$, we will also learn $(b,c)$. Therefore, we cannot learn only the direction of $e_1$ and hence, $S_0$ will not be a subgraph.
\begin{figure}[h]
\begin{center}
\centerline{\includegraphics[scale=0.2]{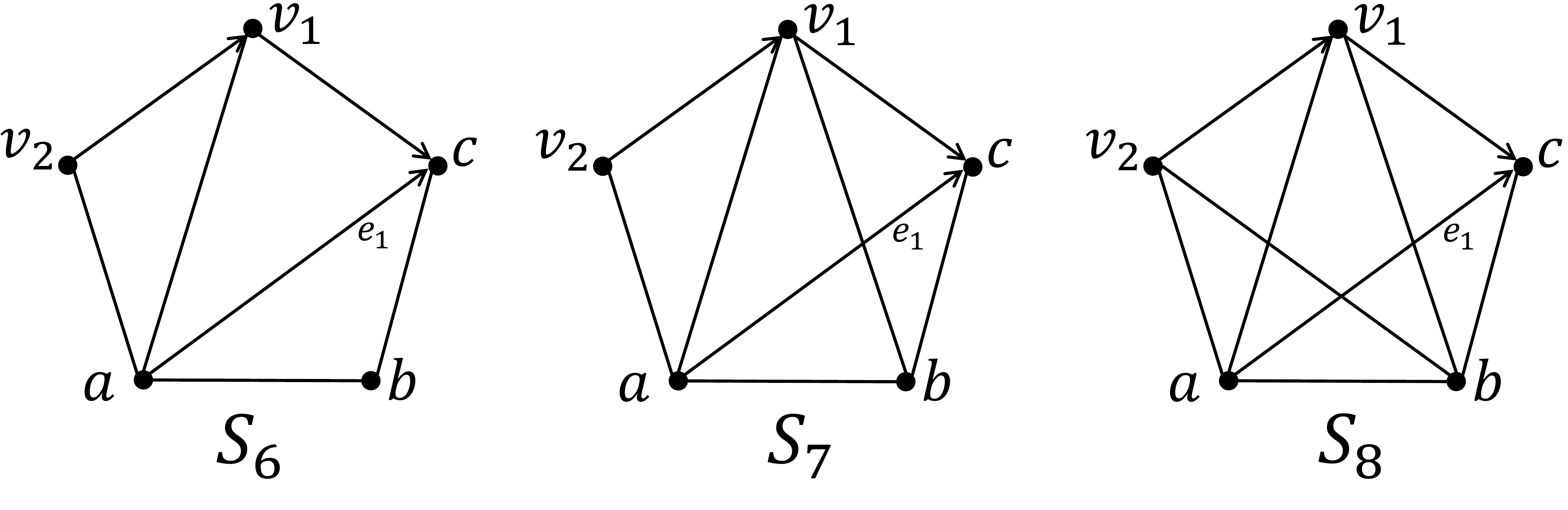}}
\caption{Rule 4}
\vspace{-7mm}
\label{fig:rule4}
\end{center}
\end{figure}

\item If we have learned the orientation of $e_1$ from rule 2, then we should have had one of the structures in Figure \ref{fig:rule2} as a subgraph of $\textit{Ess}(G^*)$ after applying the orientations learned from $R(A_{G^*}^{(\mathcal{I})},G^*)$. In case of structure $S_9$, using rule 1 on subgraph induced on vertices $\{v_1,c,b\}$, we will also learn $(c,b)$ and hence, $S_0$ will not be a subgraph.
In case of structure $S_{10}$, if $v_1\in \mathcal{I}$, then the direction of the edge $\{v_1,b\}$ will be also known. If the direction of this edge is $(v_1,b)$, then  using rule 2 on subgraph induced on vertices $\{a,v_1,b\}$, we will also learn $(a,b)$; otherwise, using rule 2 on subgraph induced on vertices $\{b,v_1,c\}$, we will also learn $(c,b)$. Therefore, $v_1\not\in \mathcal{I}$. Also, as mentioned earlier, $a\not\in \mathcal{I}$. Therefore, we have learned the orientation of $(a,v_1)$ from applying Meek rules.

In the triangle induced on vertices $\{v_1,b,a\}$, we have learned only the orientation of one edge, which is $(a,v_1)$. But as seen in structures $S_1$ to $S_9$, all of them lead to learning the orientation of at least 2 edges of a triangle. In the following, we will show that a structure of form $S_{10}$, does not lead to learning the orientation of only $(a,v_1)$ and making $S_{10}$ a subgraph either.



\begin{figure}[h]
\begin{center}
\centerline{\includegraphics[scale=0.2]{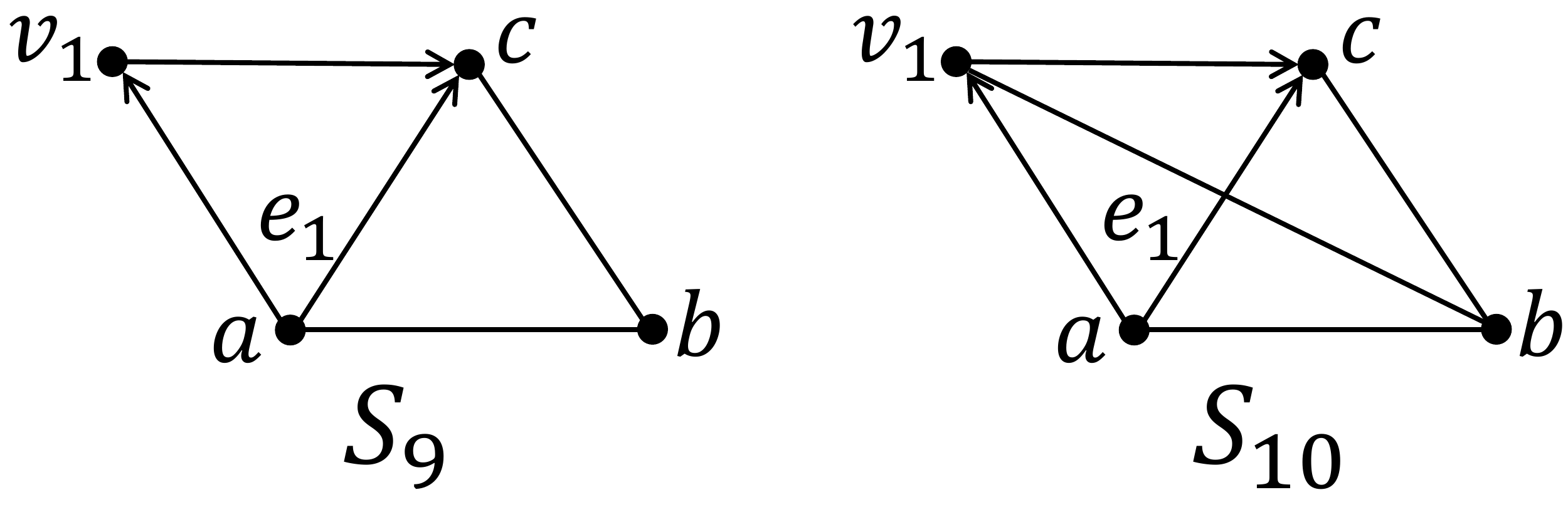}}
\caption{Rule 2}
\vspace{-7mm}
\label{fig:rule2}
\end{center}
\end{figure}

Suppose we had learned $(a,v_1)$ via a structure of form $S_{10}$, as depicted in Figure \ref{fig:s10}(a).
Using rule 4 on subgraph induced on vertices $\{v_2,v_1,c,b\}$, we will also learn $(b,c)$. Therefore, we should have the edge $\{v_2,c\}$ too. Also, using rule 2 on triangle induced on vertices $\{v_2,v_1,c\}$, the orientation of this edges should be $(v_2,c)$.
Therefore, in order to have $S_{10}$ as a subgraph, we need to have the structure depicted in Figure \ref{fig:s10}(b) as a subgraph.
As seen in Figure \ref{fig:s10}(b), we again have a structure similar to $S_{10}$: a complete skeleton $K_5$, which contains $(v_j,c)$, $(a,v_j)$, $\{v_j,b\}$, for $j\in\{1,2\}$ and $(v_2,v_1)$, with a triangle on vertices $\{v_2,b,a\}$, in which we have learned only the orientation of $(a,v_2)$.

\begin{figure}[h]
\begin{center}
\centerline{\includegraphics[scale=0.2]{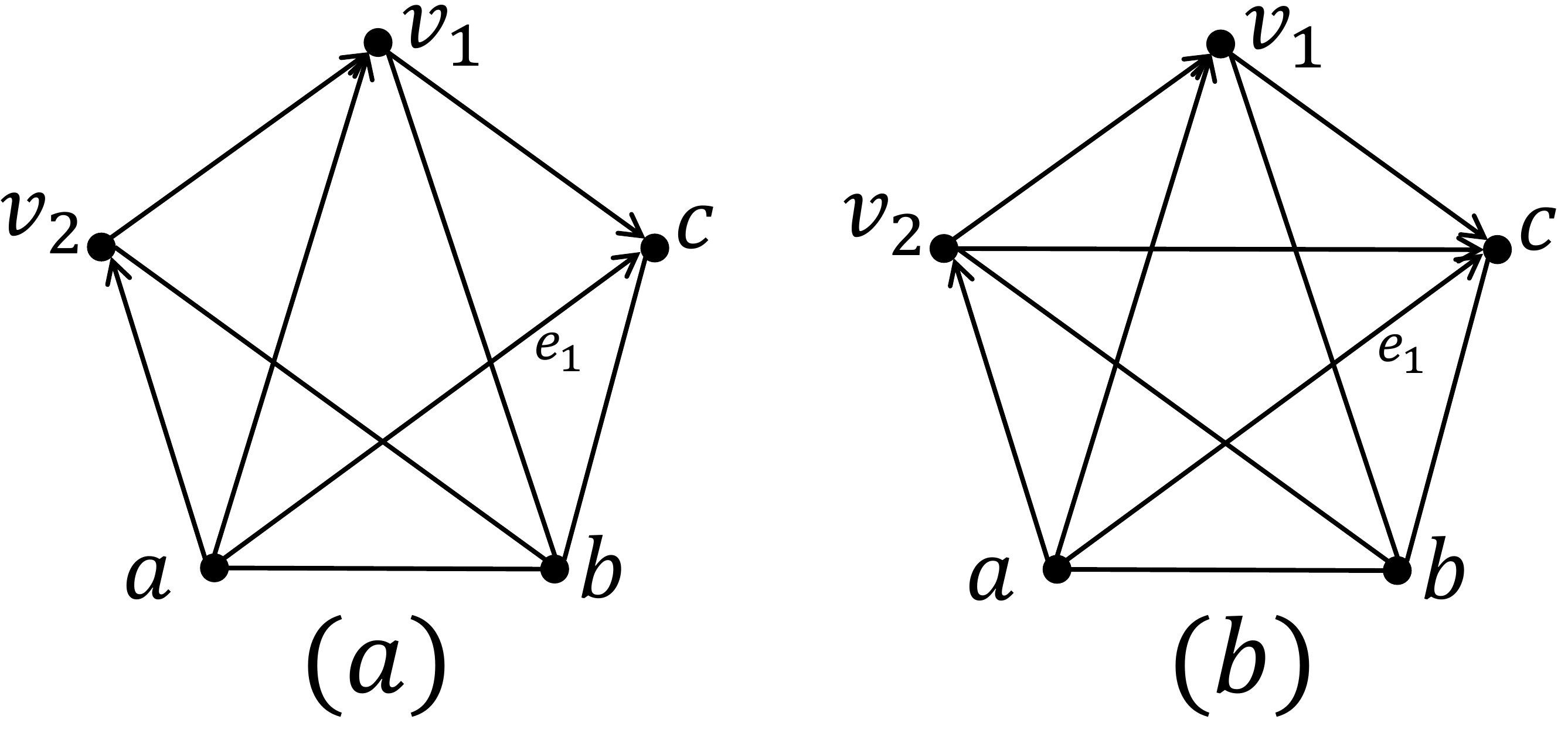}}
\caption{Step of the induction.}
\vspace{-7mm}
\label{fig:s10}
\end{center}
\end{figure}

We claim that this procedure always repeats, i.e., at step $i$, we end up with skeleton $K_i$, which contains $(v_j,c)$, $(a,v_j)$, $\{v_j,b\}$, for $j\in\{1,...,i\}$ and $(v_k,v_j)$, for $1\le j<k\le i$, with a triangle induced on vertices $\{v_i,b,a\}$, in which we have learned only the orientation of $(a,v_i)$. We prove this claim by induction. We have already proved the base of the induction above. For the step of the induction, suppose the hypothesis is true for $i-1$. Add vertex $v_i$ to form a structure of form $S_{10}$ for $(a,v_{i-1})$. $v_i$ should be adjacent to $v_j$, for $j\in\{1,...,i-2\}$; otherwise, using rule 4 on subgraph induced on vertices $\{v_i,v_{i-1},v_j,b\}$, we will also learn $(b,v_j)$. Moreover, using rule 2 on triangle induced on vertices $\{v_i,v_{i-1},v_j\}$, the direction of $\{v_i,v_j\}$ should be $(v_i,v_j)$. Also, using rule 4 on subgraph induced on vertices $\{v_i,v_{i-1},c,b\}$, we will also learn $(b,c)$. Therefore, we should have the edge $\{v_i,c\}$ too.

We showed that $S_0$ is a subgraph only if $S_{10}$ is a subgraph, and $S_{10}$ is a subgraph only if the structure in Figure \ref{fig:s10}(b) is a subgraph, and this chain of required subgraphs continue.
Therefore, since the order of the graph is finite, there exist a step where since we cannot add a new vertex, it is not possible to have one of the required subgraphs, and hence we conclude that $S_0$ is not a subgraph.\\
\end{itemize}

\noindent
\textit{Rule 3.}

Since edges $e_1$ and $e_2$ form a v-structure, they should appear in $A(\textit{Ess}(G^*))$ as well.  Therefore, we should have the condition of rule 3 satisfied when only intervening on $\mathcal{I}_1$ as well, which implies that $e\in R(A_{G^*}^{(\mathcal{I}_1)},G^*)$, which is a contradiction.\\

\noindent
\textit{Rule 4.}


Without loss of generality, assume $e_1\in R(A_{G^*}^{(\mathcal{I}_1)},G^*)\cup A(\textit{Ess}(G^*))$. Therefore, we should have the condition of rule 4 satisfied when only intervening on $\mathcal{I}_1$ as well, which implies that $e\in R(A_{G^*}^{(\mathcal{I}_1)},G^*)$, which is a contradiction.\\

The argument above proves that there is no edge $e$ such that $e\not\in R(A_{G^*}^{(\mathcal{I}_1)},G^*)$ and $e\not\in R(A_{G^*}^{(\mathcal{I}_2)},G^*)$, but $e\in R(R(A_{G^*}^{(\mathcal{I}_1)},G^*)\cup R(A_{G^*}^{(\mathcal{I}_2)},G^*),G^*)$.

\section{Proof of Theorem \ref{thm:app}}

Let $\mathcal{I}^*=\{v_1^*,...,v_k^*\}\in\arg\max_{\mathcal{I}:\mathcal{I}\subseteq V,|\mathcal{I}|= k} \mathcal{D}(\mathcal{I})$. We have
\begin{equation}
\begin{aligned}
\label{eq:app1}
&\mathcal{D}(\mathcal{I}^*)\overset{(a)}{\le}\mathcal{D}(\mathcal{I}^*\cup \mathcal{I}_i)=\mathcal{D}(\mathcal{I}_i)\\&+\sum_{j=1}^k[\mathcal{D}(\mathcal{I}_i\cup\{v_1^*,...,v_j^*\})-\mathcal{D}(\mathcal{I}_i\cup\{v_1^*,...,v_{j-1}^*\})]\\
&\overset{(b)}{\le}\mathcal{D}(\mathcal{I}_i)+\sum_{j=1}^k[\mathcal{D}(\mathcal{I}_i\cup\{v_j^*\})-\mathcal{D}(\mathcal{I}_i)],
\end{aligned}
\end{equation}
where $(a)$ follows from Lemma \ref{lem:mono}, and $(b)$ follows from Theorem \ref{thm:submodular}.
Define $\hat{\mathcal{D}}_{i,v,1}$ and $\hat{\mathcal{D}}_{i,v,2}$ as the first and second calls of subroutine in $i$-th step for variable $vv$, respectively. By the assumption of the theorem we have 
\begin{align*}
\mathcal{D}(\mathcal{I}_i\cup\{v_j^*\})-\epsilon\mathcal{D}(\mathcal{I}_i\cup\{v_j^*\})<\hat{\mathcal{D}}_{i,v^*_j,1}(\mathcal{I}_i\cup\{v_j^*\}),
\end{align*}
with probability larger than $1-\delta$. Therefore
\begin{align*}
\mathcal{D}(\mathcal{I}_i\cup\{v_j^*\})<\hat{\mathcal{D}}_{i,v^*_j,1}(\mathcal{I}_i\cup\{v_j^*\})+\epsilon\mathcal{D}(\mathcal{I}^*),
\end{align*}
with probability larger than $1-\delta$.
Similarly
\begin{align*}
\hat{\mathcal{D}}_{i,v^*_j,2}(\mathcal{I}_i)<\mathcal{D}(\mathcal{I}_i)+\epsilon\mathcal{D}(\mathcal{I}_i)\hspace{1cm}&w.p.>1-\delta,\\
\Rightarrow-\mathcal{D}(\mathcal{I}_i)< -\hat{\mathcal{D}}_{i,v^*_j,2}(\mathcal{I}_i)+\epsilon\mathcal{D}(\mathcal{I}^*)\hspace{1cm}&w.p.>1-\delta,
\end{align*}
Therefore,
\begin{equation}
\label{eq:app2}
\begin{aligned}
\mathcal{D}&(\mathcal{I}_i\cup\{v_j^*\})-\mathcal{D}(\mathcal{I}_i)<\hat{\mathcal{D}}_{i,v^*_j,1}(\mathcal{I}_i\cup\{v_j^*\})\\
&-\hat{\mathcal{D}}_{i,v^*_j,2}(\mathcal{I}_i)+2\epsilon\mathcal{D}(\mathcal{I}^*)\hspace{1cm}w.p.>1-2\delta.
\end{aligned}
\end{equation}
Also, by the definition of the greedy algorithm,
\begin{equation}
\label{eq:app3}
\begin{aligned}
\hat{\mathcal{D}}_{i,v^*_j,1}&(\mathcal{I}_i\cup\{v_j^*\})-\hat{\mathcal{D}}_{i,v^*_j,2}(\mathcal{I}_i)\\
&\le\hat{\mathcal{D}}_{i,v_{i+1},1}(\mathcal{I}_i\cup\{v_{i+1}\})-\hat{\mathcal{D}}_{i,v_{i+1},2}(\mathcal{I}_i)\\
&=\hat{\mathcal{D}}_{i,v_{i+1},1}(\mathcal{I}_{i+1})-\hat{\mathcal{D}}_{i,v_{i+1},2}(\mathcal{I}_i),
\end{aligned}
\end{equation}
and similar to \eqref{eq:app2}, we have
\begin{equation}
\label{eq:app4}
\begin{aligned}
\hat{\mathcal{D}}&_{i,v_{i+1},1}(\mathcal{I}_{i+1})-\hat{\mathcal{D}}_{i,v_{i+1},2}(\mathcal{I}_i)<
\mathcal{D}(\mathcal{I}_{i+1})\\
&-\mathcal{D}(\mathcal{I}_i)+2\epsilon\mathcal{D}(\mathcal{I}^*)\hspace{1cm}w.p.>1-2\delta.
\end{aligned}
\end{equation}
Therefore, from equations \eqref{eq:app2}, \eqref{eq:app3}, and \eqref{eq:app4} we have
\begin{equation}
\label{eq:app5}
\mathcal{D}(\mathcal{I}_i\cup\{v_j^*\})-\mathcal{D}(\mathcal{I}_i)<\mathcal{D}(\mathcal{I}_{i+1})-\mathcal{D}(\mathcal{I}_i)+4\epsilon\mathcal{D}(\mathcal{I}^*),
\end{equation}
with probability larger than $1-4\delta$. Plugging \eqref{eq:app5} back in \eqref{eq:app1}, we get
\begin{align*}
\mathcal{D}(\mathcal{I}^*)&<\mathcal{D}(\mathcal{I}_i)+\sum_{j=1}^k[\mathcal{D}(\mathcal{I}_{i+1})-\mathcal{D}(\mathcal{I}_i)+4\epsilon\mathcal{D}(\mathcal{I}^*)]\\
&=\mathcal{D}(\mathcal{I}_i)+k[\mathcal{D}(\mathcal{I}_{i+1})-\mathcal{D}(\mathcal{I}_i)]+4k\epsilon\mathcal{D}(\mathcal{I}^*),
\end{align*}
with probability larger than $1-4k\delta$. Therefore,
\begin{align*}
&\mathcal{D}(\mathcal{I}^*)-\mathcal{D}(\mathcal{I}_i)\\
&< k[\mathcal{D}(\mathcal{I}^*)-\mathcal{D}(\mathcal{I}_i)]-k[\mathcal{D}(\mathcal{I}^*)-\mathcal{D}(\mathcal{I}_{i+1})]+4k\epsilon\mathcal{D}(\mathcal{I}^*),
\end{align*}
with probability larger than $1-4k\delta$. Defining $a_i\coloneqq\mathcal{D}(\mathcal{I}^*)-\mathcal{D}(\mathcal{I}_i)$, and noting that $a_0=\mathcal{D}(\mathcal{I}^*)$, by induction we have 
\begin{align*}
a_k&=\mathcal{D}(\mathcal{I}^*)-\mathcal{D}(\mathcal{I}_k)\\
&<(1-\frac{1}{k})^k\mathcal{D}(\mathcal{I}^*)+4\epsilon\mathcal{D}(\mathcal{I}^*)\sum_{j=0}^{k-1}(1-\frac{1}{k})^j\\
&<[\frac{1}{e}+4\epsilon k]\mathcal{D}(\mathcal{I}^*)\hspace{1cm}w.p.>1-4k^2\delta.
\end{align*}
It concludes that
\[
\mathcal{D}(\mathcal{I}_k)>(1-\frac{1}{e}-4\epsilon k)\mathcal{D}(\mathcal{I}^*)\hspace{1cm}w.p.>1-4k^2\delta.
\]
Therefore, for $\epsilon=\frac{\epsilon'}{4k}$ and $\delta=\frac{\delta'}{4k^2}$,  Algorithms \ref{algorithm:GG} is a $(1-\frac{1}{e}-\epsilon')$-approximation algorithm with probability larger than $1-\delta'$.

\section{Proof of Theorem \ref{thm:comp}}

We run the algorithm for $k$ iterations. In each iteration, we execute the function $\hat{\mathcal{D}}(.)$ using Subroutine 1 for at most $n$ vertices. Furthermore, in this subroutine, we generate $N$ random DAGs by calling the function \textsc{RandEdge}, where in \cite{ghassami2018mec} it is shown that the complexity of each call is $O(n^{\Delta})$. Hence, the computational complexity of the algorithm is $O(knN\times n^{\Delta})$.

\section{Proof of Lemma \ref{lem:inMEC}}

We require the following lemma for the proof:
\begin{lemma}
\label{claim:tri}
A chordal graph has a directed cycle only if it has a directed cycle of size 3.
\end{lemma}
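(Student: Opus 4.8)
The plan is to take a shortest directed cycle and use chordality to produce a chord whose orientation either already exhibits a directed triangle or yields a strictly shorter directed cycle, contradicting minimality.

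First I would fix a directed cycle $C : v_1 \to v_2 \to \cdots \to v_m \to v_1$ of minimum length $m$ in the oriented chordal graph; such a cycle exists by hypothesis, and since the skeleton is a simple graph (no parallel edges), $m \ge 3$. Minimality forces $C$ to visit $m$ distinct vertices, so $v_1 v_2 \cdots v_m$ is a simple cycle of length $m$ in the underlying skeleton. If $m = 3$ we are done, so suppose $m \ge 4$ toward a contradiction.

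Next I would invoke chordality in the form ``a graph is chordal iff it has no induced cycle of length $\ge 4$'': the simple cycle $v_1 \cdots v_m$ cannot be induced, so it has a chord, i.e.\ an edge $\{v_i, v_j\}$ of the graph with $1 \le i < j \le m$, $j \ge i + 2$, and $(i,j) \neq (1,m)$. That edge is oriented one of two ways. If it is oriented $v_j \to v_i$, then $v_i \to v_{i+1} \to \cdots \to v_j \to v_i$ is a directed cycle of length $j - i + 1$; if it is oriented $v_i \to v_j$, then $v_j \to v_{j+1} \to \cdots \to v_m \to v_1 \to \cdots \to v_i \to v_j$ is a directed cycle of length $m - (j - i) + 1$. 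Using $j - i \ge 2$ together with $(i,j) \neq (1,m)$ (which gives $j - i \le m - 2$), in both cases the new length lies in $\{3, 4, \dots, m-1\}$, hence is a genuine directed cycle strictly shorter than $C$, contradicting the minimality of $m$. Therefore $m = 3$.

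The argument is essentially bookkeeping; the only point that needs care is verifying, in each of the two orientations of the chord, that the resulting closed walk is a directed cycle of length at least $3$ and strictly less than $m$ — this is exactly where both hypotheses $j \ge i + 2$ and $(i,j) \neq (1,m)$ get used. I do not anticipate any obstacle beyond this case check.
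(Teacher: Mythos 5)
Your proof is correct and takes essentially the same approach as the paper: use chordality to obtain a chord of the directed cycle, and observe that either orientation of the chord closes off a strictly shorter directed cycle. The only cosmetic difference is that you phrase it as a minimal-counterexample argument with an arbitrary chord, whereas the paper iteratively shortens the cycle using a chord that forms a triangle with two consecutive cycle vertices; your version has the minor advantage of not needing that (unjustified in the paper) ``short chord'' refinement.
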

\begin{proof}
If the directed cycle is of size 3 itself, the claim is trivial. Suppose the cycle $C_n$ is of size $n>3$. Relabel the vertices of $C_n$ to have $C_n=(v_1,...,v_n,v_1)$. Since the graph is chordal, $C_n$ has a chord and hence we have a triangle on vertices $\{v_i,v_{i+1},v_{i+2}\}$ for some $i$. If the direction of $\{v_i,v_{i+2}\}$ is $(v_{i+2},v_i)$, we have the directed cycle $(v_i,v_{i+1},v_{i+2},v_i)$ which is of size 3. Otherwise, we have the directed cycle $C_{n-1}=(v_1,...,v_i,v_{i+2},..,v_n,v_1)$ on $n-1$ vertices. Relabeling the vertices from $1$ to $n-1$ and repeating the above reasoning concludes the lemma.
\end{proof}
\begin{proof}[Proof of Lemma \ref{lem:inMEC}]
All the components in the undirected subgraph of $Ess(G^*)$ are chordal \cite{hauser2012characterization}. Therefore, by Lemma \ref{claim:tri}, to insure that a generated directed graph is a DAG, it suffices to make sure that it does not have any directed cycles of length 3, which is one of the checks that we do in the proposed procedure. For checking if the generated DAG is in the same Markov equivalence class as $G^*$, it suffices to check if they have the same set of v-structures \cite{judea1991equivalence}, which is the other check that we do in the proposed procedure.
\end{proof}

\end{appendices}

%
%
%

\end{document}